\definecolor{darkblue}{rgb}{0, 0, 0.5}
\definecolor{softgreen}{rgb}{0.7, 0.9, 0.7} 
\definecolor{softorange}{rgb}{1.0, 0.8, 0.6}
\theoremstyle{plain}
\newtheorem{theorem}{Theorem}[section]
\theoremstyle{definition}
\theoremstyle{remark}
\newtheorem{remark}[theorem]{Remark}
\newcommand{\autoscale}{\textsf{AutoScale}}
\title{\autoscale: Scale-Aware Data Mixing for Pre-Training LLMs}
\author{%
  Feiyang Kang\thanks{Equal contribution. $^\dagger$Correspondence to: Feiyang Kang and Ruoxi Jia \texttt{$<$fyk, ruoxijia$>$@vt.edu}.  } $\,^{\dagger}$ \\
  Virginia Tech\\
  \small{\texttt{fyk@vt.edu}}\normalsize \\
  \And
  Yifan Sun$^{*}$\\
  UIUC\\
  \small{\texttt{yifan50@illinois.edu}}\normalsize \\
  \And
  Bingbing Wen\\
  University of Washington\\
  \small{\texttt{bingbw@uw.edu}}\normalsize \\
  \And
  Si Chen\\
  Virginia Tech\\
  \small{\texttt{chensi@vt.edu}}\normalsize \\
  \AND
  Dawn Song\\
  UC Berkeley\\
  \small{\texttt{dawnsong@gmail.com}}\normalsize \\
  \And
  Rafid Mahmood\\
  University of Ottawa \& NVIDIA\\
\small{\texttt{mahmood@telfer.uottawa.ca}}\normalsize
  \And
  Ruoxi Jia$^{\dagger}$\\
  Virginia Tech\\
  \small{\texttt{ruoxijia@vt.edu}}\normalsize \\
}
\begin{document}

\ifcolmsubmission
\linenumbers
\fi

\maketitle

\begin{abstract}
Domain reweighting is an emerging research area aimed at adjusting the relative weights of different data sources to improve the effectiveness and efficiency of LLM pre-training. We show that data mixtures that perform well at smaller scales may not retain their advantage at larger scales, challenging the existing practice of determining competitive mixtures in small-scale experiments and \emph{directly} applying them at much larger scales. To address this, we propose \autoscale, a two-stage, scale-aware data composition framework. First, \autoscale~fits a parametric model that predicts the model’s loss under different data compositions, then uses it to find an approximate best allocation at smaller, more manageable budgets. Next, leveraging a novel theoretical analysis of how optimal compositions evolve with scale, \autoscale~extrapolates that composition to larger budgets without further retraining. Empirically, \autoscale~accelerates convergence and improves downstream performance.
For instance, when pre-training GPT-2 Large, it achieves a 28\% faster perplexity reduction than baselines and up to a 38\% speed-up over unweighted training, while yielding best-average results on various downstream tasks. Overall, our findings illustrate how domain importance shifts with training scale, underscoring the need for scale-dependent data curation in LLM training. 
Our code is open-sourced\footnote{\small{\url{https://github.com/feiyang-k/AutoScale}}\normalsize}.
\end{abstract}

\section{Introduction}
Large language models (LLMs) are pre-trained on vast datasets sourced from diverse domains. However, the immense computational demands of this process, coupled with limited resources, create a pressing need to enhance the effectiveness and efficiency of pre-training. A promising approach to address this challenge is through \emph{domain reweighting}---adjusting the ratio (or weights) of data from different sources.

However, developing a principled and efficient framework for determining an optimal data mix remains challenging. Many industry pipelines still rely on trial-and-error heuristics~\citep{rae2021scaling,grattafiori2024llama} or reuse domain weights designed for previous models~\citep{mehta2024openelm}, without a systematic approach for
deciding how much of each domain to include. The seminal domain-optimization work by~\citet{xie2024doremi} attempted to upweight ``difficult'' domains, but later work~\citep{fan2023doge} reported instability and only limited validation-loss improvements, partly because the chosen optimization objective does not robustly align with the model’s ultimate test-time performance. Therefore, recent methods~\citep{liu2024regmix,ye2024data} focus on \emph{directly} optimizing domain weights for lower validation loss. However, the highly complex relationship between domain weights and model performance makes such optimization expensive. A common strategy of these works is to reduce costs is to train multiple times at smaller scales, identify a ``best'' mix, and then assume it transfers to large-scale pre-training . Yet our experiments show that compositions found at smaller scales may not remain competitive when training is scaled up, whereas directly optimizing at full scale is infeasible. This yields a \emph{dilemma}: either accept small-scale solutions that may not transfer or attempt large-scale optimization that is prohibitively costly.

To resolve this dilemma, we develop a novel theoretical analysis that shows how the optimal domain composition evolves at different scales. Building on this insight, we propose a two-stage framework, \autoscale, for scale-aware domain reweighting. In the first stage, we fit a parametric model that predicts the model’s loss under different data compositions, then use it to discover an approximate optimal allocation at smaller, more manageable budgets. Next, we apply our theoretical result to extrapolate that allocation to larger budgets---without re-optimizing at full scale---thus bridging the gap between small-scale optimization and full-scale pre-training.

Our experiments, conducted on both decoder-only and encoder-only architectures, consistently that \autoscale~speeds up convergence and yields favorable downstream task performance. For instance, in pre-training GPT-2 Large on the RedPajama dataset, our approach achieves a 28\% faster perplexity reduction compared with any baseline and up to a 38\% speed-up over unweighted training, while also delivering the best downstream-task performance. Moreover, we made surprising empirical observations that data sources  traditionally viewed as ``high-quality'' (e.g., Wikipedia and scientific papers) excel at smaller scales but exhibit sharp diminishing returns as the training grows. Meanwhile, domains containing more diverse examples (e.g., CommonCrawl) continue offering loss reductions at larger scales, underscoring the importance of scale-aware data curation.

\vspace{-0.7em}\section{Related Work}\vspace{-0.5em}

 Principled training data curation for LLMs is an emerging research area, aiming to strategically select data that improves model performance. It can be performed at multiple levels---from token-level~\citep{lin2024rho} or point-level~\citep{wang2024greats} up to domain-level selection. Domain-level approaches are often more efficient because they operate at a coarser granularity, typically applying soft selection---i.e., upweighting or downweighting entire data domains.

Domain reweighting can generally be viewed as a two-step process: (i) define an objective that captures the goal of improving test loss or other model performance measures, and (ii) optimize domain weights according to that objective. DoReMi~\citep{xie2024doremi}, a seminal paper in the space, adopted GroupDRO~\citep{sagawa2019distributionally} as the objective, which implicitly upweights ``difficult'' domains. However, subsequent studies~\citep{fan2023doge} found that the performance gains are unstable and limited, partly because the chosen objective does not align with the metrics ultimately used to evaluate the model at test time. Recent methods~\citep{liu2024regmix,ye2024data,fan2023doge} attempt to directly optimize validation loss, which serves as a closer proxy for the test-time model performance metrics we actually care about. We note that there can still be misalignments between validation loss and test-time performance metrics~\citep{mosaic_evaluation_gauntlet}, which is an active research area in itself, but validation loss remains a widely accepted objective for model selection in pre-training.

With validation loss as the objective, the next challenge is how to optimize it. Evaluating the objective for a given set of weights is computationally expensive, as it requires training a model from scratch on the weighted data set and evaluating the corresponding validation loss. Existing approaches tackle this in two broad ways. One line uses surrogate models to approximate the mapping from domain weights to model performance~\citep{ye2024data,liu2024regmix}; fitting such models can still require large amounts of retrainings. For instance, in \citet{liu2024regmix}, fitting the surrogate requires more than ten times as many training runs as there are domains. Another line performs local approximations, assuming only a single gradient step for the underlying model~\citep{fan2023doge}, which may not hold for practical learning rates.

Our work follows the surrogate-modeling line but differs by proposing a new parametric function to model performance versus domain weights, which can be reliably fit with only about twice as many retraining runs as the number of domains. Crucially, existing methods often conduct this optimization at a smaller data scale to keep cost manageable, then directly apply the small-scale ``best'' mixture at large scale. We show that this scale-invariance assumption can break down in practice. To address this, we contribute a novel theoretical analysis that characterizes how the optimal domain ratio shifts across scales, enabling us to extend small-scale insights effectively to larger budgets.

Finally, the general notion that data curation should be scale-dependent has appeared in prior works~\citep{sorscher2022beyond,goyal2024science}. However, \citet{sorscher2022beyond} argues this point using a simplified analysis with a perception model---showing that larger scales favor ``harder'' samples while smaller scales favor ``easier'' samples---but does \emph{not} propose a practical pipeline for scale-aware data selection. Our work addresses this gap by introducing a concrete method for scale-dependent curation at foundation-model scale. Meanwhile, \citet{goyal2024science} focuses on the CLIP model~\citep{radford2021learning}, finding that different training epochs call for different data-selection thresholds. By contrast, our setting involves (i) LLM pre-training (with typically one epoch), and (ii) different data modality and selection granularity.

\section{Methodology} 

\begin{wrapfigure}{R}{0.4\textwidth}\vspace{-4.5em}
\begin{minipage}{0.4\textwidth}
    \scalebox{0.39}{\includegraphics{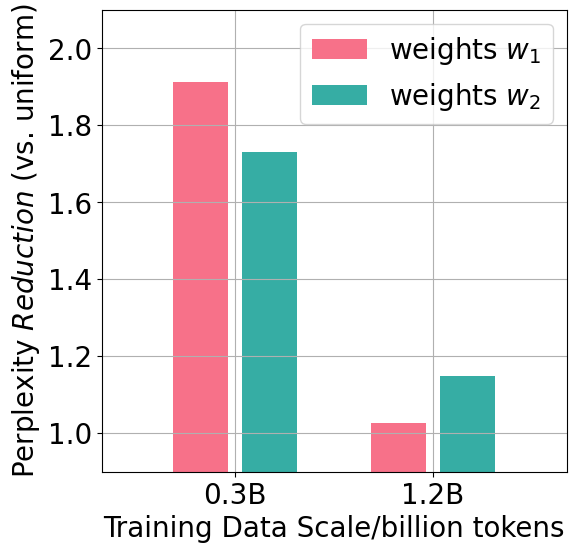}}\vspace{-0.2em}
     \caption{\small{
     \textbf{Domain weights that excel at one scale may underperform at another}. Weights $w_1$ and $w_2$ are obtained by running DDO (as introduced in Section~\ref{sec:ddo_text}) at 0.3B and 1.2B, respectively.
     }}\normalsize
    \label{fig:evidence}\vspace{-3em}
\end{minipage}
\end{wrapfigure}

\paragraph{Evidence of scale dependence.}  A simple experiment illustrates that domain weighting is not one‐size‐fits‐all: we derive two data mixes (with a procedure introduced later) and compare them at different training budgets. As shown in Figure~\ref{fig:evidence}, when tested at 0.3B tokens, Mix A beats Mix B as measured by validation perplexity reduction compared against uniform weights, but at 1.2 B tokens Mix B outperform Mix A. This flip indicates that a domain which helps more at a smaller scale may not remain a better choice at a larger scale, while another domain initially less impactful can become more valuable as training grows. Consequently, a scale‐aware approach is needed so that domain weights adapt as more tokens are introduced.

\vspace{-0.5em}\subsection{Problem Formulation}

To capture how domain importance shifts with the total training budget, we now formalize the scale‐dependent domain reweighting problem. This framework lets us solve for a better domain mixture at any given data scale.

\vspace{-0.5em}\paragraph{Notations and setup.}
We consider $m$ domains $\{D_1, \dots, D_m\}$, each with a large pool of training examples. A \emph{domain mix} is specified by a weight vector $\mathbf{w} = [w_1, \dots, w_m]^\top$ on the probability simplex $\mathbb{W}^m 
  \;:=\; 
  \bigl\{ 
    \mathbf{w}\in \mathbb{R}^m 
    \,\big\vert\, 
    \sum_{i=1}^m w_i = 1,\,
    w_i \ge 0 
    \text{ for all } i 
  \bigr\}$.
Given a total budget of $N$ tokens, let $N_i = \lfloor w_i \cdot N \rfloor$ be the number of tokens chosen from domain $D_i$. We denote this resulting dataset by
$
  S(N, \mathbf{w}) \;=\; \{ S_1, \dots, S_m \},
$
where $S_i\subseteq D_i, \lvert S_i \rvert = N_i$. Training a model $\boldsymbol{\theta}$ on $S(N, \mathbf{w})$ means solving an empirical risk minimization (ERM) objective:
$
  \boldsymbol{\theta}^*(N, \mathbf{w})
  \;=\;
  \arg\min_{\boldsymbol{\theta}}
  \;\mathcal{L}\bigl(\boldsymbol{\theta},\,S(N,\mathbf{w})\bigr),
$
where $\mathcal{L}$ is a next-token prediction loss.

\vspace{-0.0em}\paragraph{Objective function.} We assess a domain mix $\mathbf{w}$ by measuring the validation loss
$
  \mathcal{L}^v\bigl(\boldsymbol{\theta}^*(N,\mathbf{w})\bigr)
  \;=\;
  \mathcal{L}\bigl(\boldsymbol{\theta}^*(N,\mathbf{w}),\,D^v\bigr)
$
on a held-out dataset $D^v$. 
We then seek the mix $\mathbf{w}$ that \emph{minimizes} this validation metric at scale $N$:
\begin{equation}
\label{eq:bilevel}
  \mathbf{w}^*
  \;=\;
  \arg\min_{\mathbf{w}\in \mathbb{W}^m}
  \;\;
  \mathcal{L}^v\bigl(\boldsymbol{\theta}^*(N,\mathbf{w})\bigr).  
\end{equation}
Because $\boldsymbol{\theta}^*$ depends on $\mathbf{w}$ via ERM, this becomes a \emph{bi-level optimization} problem. Since no closed-form expression exists for $\boldsymbol{\theta}^*$, any gradient-based approach 
must approximate 
$\tfrac{\partial}{\partial \mathbf{w}}\,\mathcal{L}^v\bigl(\boldsymbol{\theta}^*(N,\mathbf{w})\bigr)$. Traditional bi-level methods rely on higher-order derivatives with respect to model parameters for this approximation, but such techniques become infeasible at the scale of modern foundation models~\citep{liu2021investigating}.

\vspace{-0.5em}\subsection{Our Solution}\vspace{-0.5em}
We propose a \textbf{two-stage} framework, \autoscale, for finding scale‐aware data compositions, which first approximates the optimal data mix at small scales and extrapolates to a larger target scale:
\begin{enumerate}
    \item Direct Data Optimization (\textsc{DDO}): At smaller, computationally feasible scales, we learn a mapping from domain weights to validation loss. This reduces the original bi-level problem to a single-level convex optimization---letting us approximate the “best” domain mix for that smaller budget.
    \item Optimal Mix Projection: Building on a theoretical analysis of how domain importance changes with total tokens, we then extrapolate those small‐scale DDO solutions to a larger data budget.
\end{enumerate}

\subsubsection{Direct data optimization} 
\label{sec:ddo_text}
Direct Data Optimization (DDO) is a practical method for approximating the solution to the bi-level domain‐weighting problem at relatively small data scales. The key idea of DDO is to approximate the validation loss $\mathcal{L}^v\bigl(\boldsymbol{\theta}^*(N,\mathbf{w})\bigr)$ as a \emph{parametric function} of the domain‐weight vector $\mathbf{w}$. This effectively reduces our bi‐level objective (choose $\mathbf{w}$ while also training $\theta$) to a \emph{single‐level} optimization, which can be solved efficiently via standard gradient‐based methods.


We begin by noting that the validation loss can be \emph{represented} by scaling laws as a function of data size for each individual domain. We \emph{model} the dependence of validation loss on the size of data from each domain, then \emph{aggregate} these functions to derive the final approximation for the validation loss on $\mathbf{w}$.

Drawing inspiration from neural scaling laws---which indicate a power-law relationship between training data scale and validation loss~\citep{kaplan2020scaling}---we assume that validation loss as a function of domain $i$'s data  size follows
\[
  \mathcal{L}^v \bigl(\boldsymbol{\theta}^*(N,\mathbf{w}) \bigr)
  \;\approx\;
  \bigl(N_0^i + w_i \cdot N \bigr)^{-\gamma_i} \;+\;\ell_i.
\]
Here, $w_i$ denotes the fraction of the total token budget $N$ allocated to domain $i$.
The term $N_0^i$ represents an the ``equivalent data size'' contributed by domains other than $i$,
while $\gamma_i$ governs how quickly domain $i$ reaches a point of diminishing returns.
Lastly, $\ell_i$ represents the irreducible term in the loss function.

To learn these parameters $\{N_0^i, \gamma_i, \ell_i\}$ for each domain $i$, we retrain the model after perturbing $w_i$ upward and downward, measure the change in total validation loss, and then fit $(N_0^i + w_i \cdot N)^{-\gamma_i} + \ell_i$ via least squares. 
Because $\mathcal{L}^v$ aggregates the effects of data size for each domain, our final approximation for the validation loss is:
\[
  \mathcal{L}^v \bigl(\boldsymbol{\theta}^*(N,\mathbf{w}) \bigr)
  \;\approx\;
  \sum_{i=1}^m 
  \Bigl( N_0^i + w_i \cdot N \Bigr)^{-\gamma_i}
  \;+\;\ell_i.
\]
Once we have fitted the parameters, we can \emph{directly optimize} over $\mathbf{w}$ subject to $\sum_{i=1}^m w_i = 1$ to approximate the optimal domain mix under the total token budget $N$.

Because DDO only requires retraining at $(2m + 1)$ mixes 
(one baseline plus up/down perturbations for each of the $m$ domains), 
it is far cheaper than a naive zero-order method that retrains the model 
at every weight update. 
Nevertheless, DDO is best suited for moderate domain counts ($m$) 
and data scales ($N$). 
For much larger target scales, we introduce a \emph{second stage} 
that extrapolates the ``best'' DDO mix from smaller scales to significantly 
bigger budgets, all \emph{without} additional retraining.

\subsubsection{Optimal mix projection}\label{sec:thm} Our method for extrapolating domain mixes to much larger training budgets hinges on a novel theoretical result that we developed to characterize how the optimal mix ratio depends on the total data scale. Note that all inverses $(\cdot)^{-1}$, products, and exponentiations on vectors below are understood \emph{elementwise}.

\begin{tcolorbox}[title=Theorem 1: Scale-Dependent Optimal Composition,colback=white,colframe=black]
Consider the optimization problem
\[
  \min_{\mathbf{N}}
    \biggl\{
      \sum_{i=1}^m \beta_i \, N_i^{-\gamma_i}
      \;\Big\vert\;
      \sum_{i=1}^m N_i = N
    \biggr\},
\]
where $\beta_i \ge 0$ and $\gamma_i \ge 0$ for all $i$, and $\mathbf{N} = (N_1,\dots,N_m)$ denotes the domain allocations.  
Let $\mathbf{N}^*(N)$ be the \emph{optimal allocation} that minimizes the sum above for a total budget $N$.
For two distinct budgets $N^{(1)} \neq N^{(2)}$, and any larger budget $N^{(3)}$, suppose there is a constant $k > 0$ such that
\[
  \mathbf{N}\bigl(N^{(3)}\bigr)
  \;=\;
  \mathbf{N}^*\bigl(N^{(2)}\bigr)
  \;\Bigl[
    \bigl(\mathbf{N}^*\bigl(N^{(1)}\bigr)\bigr)^{-1}
    \;\mathbf{N}^*\bigl(N^{(2)}\bigr)
  \Bigr]^k,
  \quad
  \text{with }
  \sum_{i=1}^m N_i\bigl(N^{(3)}\bigr) = N^{(3)}.
\]
Then $\mathbf{N}\bigl(N^{(3)}\bigr)$ is also the optimal allocation for the budget $N^{(3)}$, \emph{i.e.},
\[
  \mathbf{N}\bigl(N^{(3)}\bigr)
  \;=\;
  \arg\min_{\mathbf{N}}
  \Bigl\{
    \sum_{i=1}^m \beta_i \, N_i^{-\gamma_i}
    \;\Big\vert\;
    \sum_{i=1}^m N_i = N^{(3)}
  \Bigr\}
  \;=\;
  \mathbf{N}^*\bigl(N^{(3)}\bigr).
\]
\end{tcolorbox}\label{thm:thm1}

\paragraph{Proof overview (high-level).} At optimality, the first-order (KKT) conditions impose that each domain’s partial derivative of the loss 
matches up to a single Lagrange multiplier. 
From this, we can derive how each domain’s optimal allocation $N_i^*(N)$ scales 
when transitioning from budget $N^{(1)}$ to another budget $N^{(2)}$.
These domain-by-domain scaling factors do not depend on the absolute size of $N$, 
only on the relative shifts between domains, 
which in turn yields an exponential-style expression for the optimal allocation at a third budget $N^{(3)}$.
Thus, once we know the optimal allocations at two budgets, 
we can directly construct the optimal allocation for any larger budget 
\emph{without} re-solving the entire optimization.

\paragraph{Interpretation of the theory.} The statement above assumes each domain $D_i$ contributes $\beta_i \, N_i^{-\gamma_i}$ \emph{independently} 
to the total validation loss. In Appendix~\ref{app:theorem1}, we generalize this to cases where domains may overlap 
by treating the evaluation as composed of multiple ``latent skills''~\citep{tiong2024toward}; 
the same exponential-style scaling behavior still emerges.

We defer the full proof to Appendix~\ref{app:lemma1} (where we employ first-order optimality/KKT conditions),
but the key insight is that domains \emph{saturate} at different rates depending on their exponents $\gamma_i$.
Specifically, a domain $D_i$ with a small $\gamma_i$ saturates more slowly and thus continues to yield
benefits at larger budgets, receiving an increasingly bigger fraction of tokens as $N$ grows.
In contrast, a large $\gamma_i$ indicates that $D_i$ quickly saturates, so it is favored primarily at smaller scales.

Concretely, this means the ``optimal mix ratio'' is \emph{not} constant across all scales.
As the total budget $N$ increases, domains with smaller $\gamma_i$ are allocated a larger share.
The theorem's exponential-style update precisely captures these changing allocations,
enabling us to predict the best mix at a higher budget 
given the solutions at two smaller budgets---\emph{without} re-solving the entire optimization problem.

\subsubsection{Overall algorithm}

Having established that the optimal domain mix varies predictably with training budget, we now summarize \autoscale~, our proposed two-stage approach to optimize data mix. 

\textbf{Stage 1} (pseudocode provided in Algorithm~\ref{alg:ddo}): Pick two feasible scales $N^{(1)}$ and $N^{(2)}$ (with $N^{(1)} < N^{(2)}$), where retraining the model is still affordable. Run DDO to obtain optimal allocations $\mathbf{N}^*(N^{(1)})$ and $\mathbf{N}^*(N^{(2)})$. 

\textbf{Stage 2} (pseudocode provided in Algorithm~\ref{alg:as}): Leveraging our theoretical result, we automatically predict the optimal domain mix for any larger scale. Specifically, starting from the optimal domain allocation $\mathbf{N}^*\bigl(N^{(2)}\bigr)$, 
we repeatedly ``scale up'' by multiplying by 
$\Bigl[(\mathbf{N}^*(N^{(1)}))^{-1} \,\mathbf{N}^*\bigl(N^{(2)}\bigr)\Bigr]^\delta$. 
Each such update yields a new allocation at a larger budget than before. 
We continue until reaching or exceeding the target budget $N^{\text{tgt}}$. 
By adjusting the resolution $\delta$, we control the granularity of these updates, 
thus reaching the target scale with any desired accuracy.

\vspace{-0.5em}\section{Evaluation}\label{sec:eval}

Our evaluation aims to address the following questions: 
\begin{itemize}
    \item \textbf{(RQ1)} \emph{Does DDO yield better domain weighting at smaller scales?} 
In our approach, DDO is designed to approximate the best data mix at a given scale. While we cannot verify its optimality, we want to see if DDO meaningfully improves domain weighting compared to baseline methods (Section~\ref{sec:ddo}). 
\item \textbf{(RQ2)} \emph{Can \autoscale---DDO at smaller scales plus our theoretical projection to larger scales---achieve training efficiency and performance benefits when direct DDO at large scale is prohibitively expensive?}(Section~\ref{sec:autoscale})
\end{itemize}









\vspace{-0.5em}
\paragraph{Overview of experimental settings.} We provide an overview here and defer the full details to Appendix~\ref{app:eval}. \textbf{(I) Models and datasets.} We experiment with two architectures---GPT-2~Large (774M parameters) and BERT (110M), 
training on up to 10B tokens. While this budget is comparatively small for the latest LLM regimes, 
it already \textbf{exceeds} the data scales used in many existing domain-reweighting studies~\citep{fan2023doge,chen2024skill}, 
serving as a feasible testbed for \emph{prototype} ideas in non-commercial settings. Specifically, we pre-train GPT-2~Large on the RedPajama dataset~\citep{together2023redpajama},which spans seven domains (e.g., Common Crawl, C4~\citep{raffel2020exploring}, GitHub, Wikipedia, ArXiv, StackExchange). 
For BERT, we use data from five sources---Amazon Reviews, Arxiv, Books, Wikipedia, 
and Open WebText~\citep{Gokaslan2019OpenWeb}. \textbf{(II) Baselines.} We compare our methods against several baseline strategies. \textbf{Uniform} samples data from each domain, leading to the same token count per domain. \textbf{Llama weights} are a curated set of heuristically tuned domain weights from the LLaMA-1/2 models~\citep{touvron2023llama}. \textbf{DoReMi}~\citep{xie2024doremi} is a seminal paper in this domain-reweighting space, offering an early, principled approach to finding domain weights. \textbf{Data mixing law}~\citep{ye2024data} and \textbf{RegMix}~\citep{liu2024regmix} represent the latest state-of-the-art. \textbf{(III) Metrics.} We measure \emph{test} perplexity and also evaluate downstream performance to confirm that improvements extend to practical tasks.






\subsection{Evaluating DDO}\label{sec:ddo}


\paragraph{Effectiveness of DDO-optimized weights.} We perform DDO on GPT-2~Large at two different data scales (0.3B and 1.2B tokens)
to obtain DDO weights specifically optimized for each scale.
We then retrain the model under these DDO-derived weights and compare the evaluation loss
against two baselines:
(1) Uniform (no reweighting), and
(2) RegMix, the latest state-of-the-art approach.
(We omit DoReMi here, as it has been surpassed by RegMix~\citep{liu2024regmix}.) For each set of weights, the model is trained at both 0.3B and 1.2B tokens,
with results in Table~\ref{tab:ddo_table1}.
At both scales, DDO-optimized domain weights significantly outperform
the Uniform baseline, achieving a notably lower evaluation loss.
DDO-optimized weights also surpass RegMix when models are trained at the same scale as the domain-weight optimization, indicating that DDO finds more effective domain weights than RegMix. Further, RegMix does not consider adaptation for training models at different scales. Applying RegMix optimized weights on larger data scales appears less effective, evident by the \textit{widening gap} between its performance from DDO's. 
Notably, the DDO-derived weights yield the strongest gains at the scale
for which they were optimized, while showing less advantage when used at a \emph{different} scale,
highlighting the \emph{scale-dependent} nature of domain weighting.


\begin{table}[h!]\centering\resizebox{0.8\linewidth}{!}{
\begin{tabular}{l|cc}
\toprule
\textbf{Weights/Actual training scale} & 0.3B training tokens           & 1.2B training tokens      \\ \midrule
Uniform Weights           & 48.04            & 28.11    \\ 
RegMix Weights (optimized at 0.3B)           & 46.56 (-1.48)           & 27.86 (-0.25)   \\\midrule
DDO Weights (optimized at 0.3B)          & \textbf{46.13 (-1.91)}            & 27.09 (-1.02)     \\ 
DDO Weights (optimized at 1.2B)           & 46.31(-1.73)           & \textbf{26.97 (-1.14)}     \\ 
\bottomrule
\end{tabular}}\vspace{-0.5em}\caption{\small{GPT-2 Large trained with DDO optimized domain weights achieve significantly reduced test perplexity compared to with non-optimized, uniform weights, also outperforming RegMix. DDO optimized weights appear most performant at the data scale they were optimized. 
}}\normalsize\label{tab:ddo_table1}\vspace{-1em}
\end{table}

In addition, we apply DDO to BERT at 0.3B tokens; the resulting model performance from the DDO-optimized weights is shown in Fig.~\ref{fig:figure2}. 
These weights reduce the model's validation loss on all training domains \emph{and} on held-out 
non-training domains, demonstrating DDO's effectiveness in improving training efficiency. 
Furthermore, when evaluated on the GLUE benchmark and the SQuAD dataset, the DDO-optimized weights 
also yield a notable improvement in downstream task performance.

\begin{figure}[h!] 
    \centering
    \begin{subfigure}[b]{0.49\textwidth}
        \includegraphics[width=\textwidth]{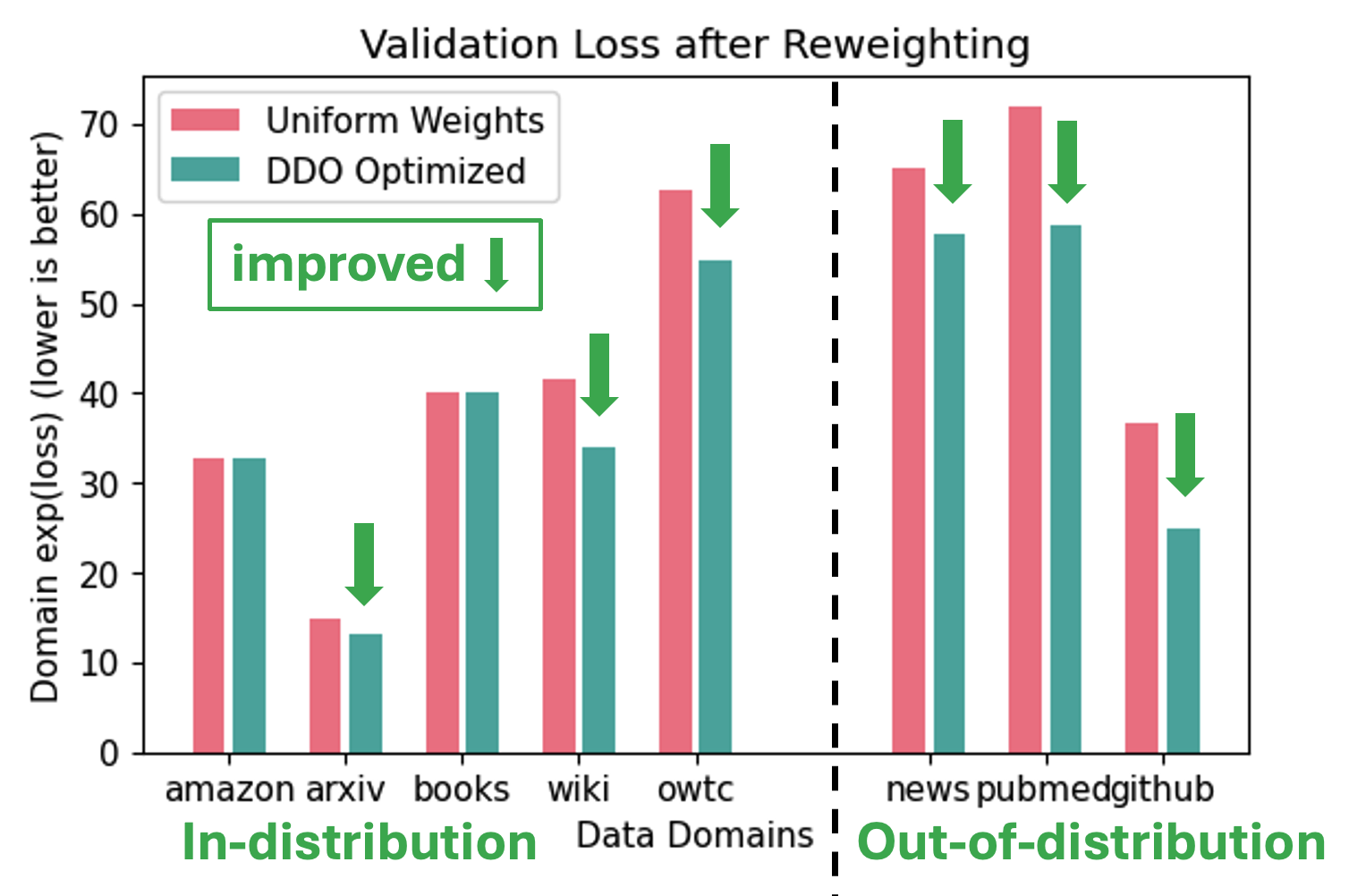}
        \caption{Validation Loss ($\downarrow$ lower is better)}
        \label{fig:figure2a}
    \end{subfigure}
    \hspace{0em}
    \begin{subfigure}[b]{0.49\textwidth}
        \includegraphics[width=\textwidth]{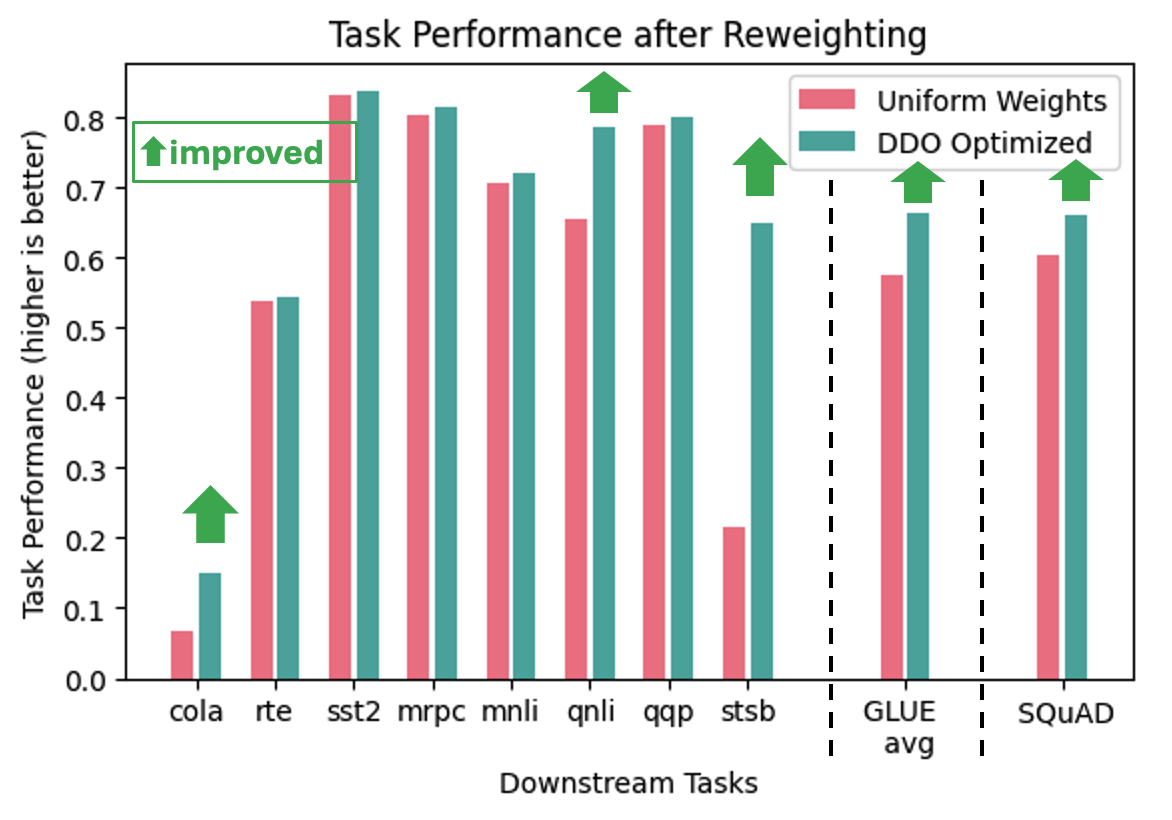}\vspace{-0.5em}
        \caption{Task Performance ($\uparrow$ higher is better)}
        \label{fig:figure2b}
    \end{subfigure} 
     \caption{\small{Optimizing domain weights with \textsc{DDO} algorithm for pre-training Encoder-only LMs (\texttt{BERT}). \textsc{DDO} substantially reduces validation loss. After reweighting, all training domains' loss has decreased or remained unchanged. Out-of-domain loss on non-training domains also decreased considerably. Enhanced performance is observed on all \texttt{GLUE} tasks (eval metric: \texttt{cola}: Matt. corr., \texttt{stsb}: Pearson corr., rest: acc.) and \texttt{SQuAD} (acc.).} \normalsize}
    \label{fig:figure2}\vspace{-1em}
\end{figure}

\paragraph{Analyzing DDO's effectiveness.} Recall that the key idea of DDO is to use a power-law--based parametric function 
to predict validation loss from domain weights. 
A major factor in DDO’s effectiveness lies in the \emph{accuracy} of this function. 
We quantify its predictive power via the \emph{average absolute relative error (AAR)} 
between the predicted and actual losses.
In our experiments, the AAR is $1.00\%$, 
indicating that DDO’s modeling closely reflects actual loss.




\subsection{Evaluating~\autoscale}\label{sec:autoscale}

\paragraph{Effectiveness of our extrapolated weights.} Recall that \autoscale~is a two-stage pipeline: first, run DDO at smaller scales
to identify domain weights, then extrapolate those weights to a larger scale. We call the resulting allocation the \emph{\autoscale~weights}.
For GPT-2~Large, we run DDO on up to 0.6B tokens, then extrapolate to 3B and 10B tokens. Figure~\ref{fig:gpt10b} shows the change of test perplexity during training for models trained with 10B tokens
using \autoscale~weights versus baseline allocations.
\autoscale~consistently outperforms every baseline by a 28--38\% margin
and also demonstrates advantageous downstream performance.
Table~\ref{tab:gpt2_3b} demonstrates the results on 3B tokens,
revealing that \autoscale~maintains its superiority in both final loss achieved
and faster convergence.
Table~\ref{tab:gpt2_3b_domain} examines domain-wise test perplexities;
\autoscale~weights significantly reduce the loss on the \texttt{Books} domain
and improve worst-domain perplexity, also yielding a better average across domains.
Finally, Table~\ref{tab:gpt2_3b_task} evaluates eight downstream tasks.
The model trained with \autoscale~weights achieves the best overall performance,
further underscoring the effectiveness of our extrapolated domain weights.

For BERT, we train up to 288k steps (approximately 120\% of the original BERT-base budget \citep{devlin2018bert}).
Table~\ref{table10} shows that, compared to uniform (no reweighting), 
\autoscale~yields a 16.7\% speed-up at most data scales and a 10\% speed-up at the largest scale, 
demonstrating consistent effectiveness. 
However, these gains are smaller than those observed for GPT-2~Large, 
indicating that different architectures and training objectives may respond differently to domain reweighting.
This is also hinted at in Figure~\ref{fig:figure30}, 
where the evaluation loss shows a more uniform response to each domain,
suggesting fewer benefits from reweighting in BERT's setup.

\vspace{-0.5em}
\begin{figure}[h!]
    \centering
        \includegraphics[width=\textwidth]{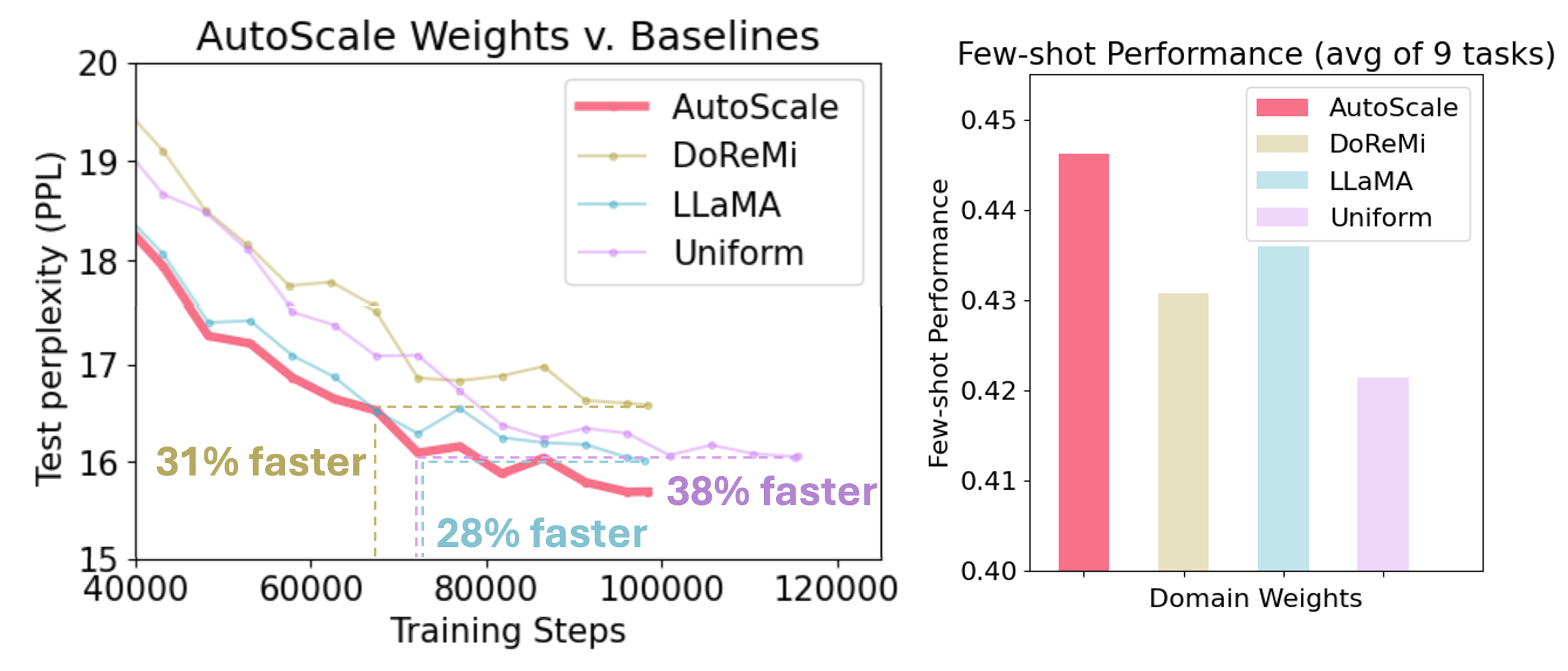}
        \label{fig:figure7a}\vspace{-2em}
    \caption{\small{Training 774M Decoder-only LMs (GPT-2 Large) for 10B tokens (96k steps). \autoscale-predicted domain weights decrease test perplexity at least $28\%$ faster than any baseline with up to $38\%$ speed up, achieving best overall task performance.}}\normalsize
    \label{fig:gpt10b}\vspace{-0.5em}
\end{figure}

\begin{table}[h!]\centering\resizebox{0.7\linewidth}{!}{
\begin{tabular}{l|cc}
\toprule
\textbf{Weights} & Final Perplexity           & AutoScale Speed Improvement\\
(3B training tokens)&(PPL) & (\% steps saved to final PPL) \\ \midrule
AutoScale (ours)           &  \textbf{21.123}            & -    \\ \midrule
DoReMi          & 21.676            & 25\%     \\ 
Data Mixing Laws          & 23.333           & 37\%     \\ 
LLaMA           & 22.944            & 31\%   \\ RegMix           & 21.740            & 28\%   \\ 
\midrule
Uniform (30\% more tokens)           & 21.736           & 37\%     \\ 
\bottomrule
\end{tabular}}\vspace{-0.5em}
\caption{\small{Domain perplexity for 774M Decoder-only LMs (GPT-2 Large) trained for 3B tokens. \autoscale~-predicted weights decreases val loss at least $25\%$ faster than any baseline with up to $37\%$ speed up. Despite LLaMa weights being very different from uniform weights, they yield highly similar training efficiency at these data scales.}}\normalsize
\label{tab:gpt2_3b}\vspace{-0.5em}
\end{table}

\begin{table}[h!]\centering\resizebox{0.85\linewidth}{!}{
\begin{tabular}{l|c|cccc|c}
\toprule
\textbf{Domain/Method} & AutoScale            & DoReMi        & Data Mixing    & LLaMA & RegMix       & Uniform 
\\&(ours)&& Laws &&&(30\% more tokens) \\ \midrule
Common Crawl           & 25.598           & \textbf{24.116} & 30.824          & 21.464  &  24.430& 28.351                 \\
Github                 & 7.482           & 6.678          & \textbf{5.845} & 7.376  & 6.145 & 5.784                \\
Books                  & \textbf{29.162}  & 33.324          & 34.450          & 35.533  & 32.985 & 31.140                  \\
Wikipedia              & 18.828            & \textbf{17.154} & 26.795          & 21.110  & 20.177 & 19.570                  \\
C4                     & \textbf{34.242}  & 39.429          & 38.521          & 37.393  & 39.654 & 40.323                 \\
Stack Exchange         & 15.991           & 15.393          & \textbf{14.519} & 20.133  & 15.225& 13.890                 \\
Arxiv                  & 16.558           & 15.638          & \textbf{12.372} & 17.598  & 13.563 & 13.082                 \\ \midrule
\textbf{Average}       & \textbf{21.123} & 21.676         & 23.333          & 22.944 & 21.740 & 21.736                \\ \midrule
\textbf{Worst-domain}  & \textbf{34.242} & 39.429         & 38.521         & 37.393 & 39.654& 40.323                \\ \bottomrule
\end{tabular}}\vspace{-0.5em}\caption{\small{Domain perplexity for 774M GPT-2 Large trained for 3B tokens. \autoscale~ notably achieves the lowest average test perplexity while also significantly decreasing worse-domain perplexity. }}\normalsize\label{tab:gpt2_3b_domain}\vspace{-1.0em}
\end{table}

\begin{table}[h!]
\centering\vspace{-0.0em}
\resizebox{\linewidth}{!}{
\begin{tabular}{l|c|cccccccc}
\toprule
\textbf{Method/Task}                                                           & \textbf{Avg} & pubmedqa       & piqa            & hellaswag       & crows\_pairs & boolq           & arc\_easy      & truthfulqa  & hellaswag   \\
&&&&(10-shot)&\_english&&&\_mc2&(zero-shot)\\\midrule
\textbf{AutoScale (ours)}      & \textbf{0.4746}              & \textbf{0.536} & \textbf{0.6202} & \textbf{0.3021} & 0.5850                 & 0.6141 & \textbf{0.3977}  & 0.4385   & \textbf{0.3030}        \\ \midrule
Uniform Weights        & 0.4514                                                                  & 0.438          & 0.6115          & 0.2923          & 0.5886                & 0.5636          & 0.3742         & 0.4526    & 0.2907                     \\
LLaMA Weights            & 0.4585                                   & 0.492          & 0.6055          & 0.2944          & \textbf{0.5903}       & 0.5612          & 0.3956     & 0.434         & 0.2952                  \\

Data Mixing Laws        & 0.4610                                                   & 0.468          & 0.6061          & 0.2951          & 0.5778                & \textbf{0.6162}          & 0.3771       & \textbf{0.4537}     & 0.2938                  \\ 
DoReMi              & 0.4482                                       & 0.468          & 0.5985          & 0.2886          & 0.5742                & 0.5410          & 0.3750        & 0.4505       & 0.2896                     \\
RegMix & 0.4642  & 0.526 & 0.6077 & 0.2907 & 0.5850  & 0.6000 & 0.3721 & 0.4455 & 0.2868 \\
\bottomrule
\end{tabular}} \vspace{-0.5em}\caption{\small{Task performance for 774M GPT-2 Large trained for 3B tokens. Models trained with \autoscale~-predicted weights achieve the best overall performance across the tasks.}\normalsize}\label{tab:gpt2_3b_task}
\end{table}


\begin{table}[h!]\vspace{-0em}\centering{\resizebox{0.7\linewidth}{!}{
\begin{tabular}{lccccc}
\toprule
\textbf{Data Scale/steps} & 18k       & 36k       & 72k        & 144k       & 288k       \\\midrule
Final Loss (exp)  & 38.32     & 16.94     & 10.97      & 8.13       & 6.30       \\
Steps Saved       & 5k (28\%) & 5k (14\%) & 10k (14\%) & 20k (14\%) & 20k (10\%)\\
\bottomrule
\end{tabular}}}\vspace{-0.5em}\caption{\small{\autoscale~ notably improving training efficiency for \texttt{BERT} models on all scales–even for a considerably large scale, 288k steps, the speedup margin remains visible.}\normalsize}\label{table10} \vspace{-0.5em}
\end{table}



\paragraph{Examining how domain importance evolves with scale.} To illustrate the shift in domain importance, we first run DDO on GPT-2~Large 
across scales ranging from 30M to 1.2B tokens.
Figure~\ref{fig:figurex}(a) shows that the DDO-optimized weights differ \emph{visibly} 
at each scale, highlighting a clear shifting pattern.
Data sources with more standardized formats (\texttt{Wikipedia}, scientific papers)—often regarded as 
“high quality”—dominate at smaller scales but exhibit sharp diminishing returns as the data budget grows.
By contrast, domains with more diverse examples 
(\texttt{C4}, \texttt{CommonCrawl}) continue to lower training loss even at higher scales.

Consistently, taking DDO-optimized weights from up to 0.6B tokens, 
we use our theory to project how the composition would shift at scales beyond 1.2B.
Figures~\ref{fig:figurex}bd) and \ref{fig:figure4} show that as the training data scale grows, 
diverse domains (\texttt{C4}, \texttt{CommonCrawl}) command a larger share of the mix 
compared to “standard” domains. We observe a similar pattern with BERT, where we extrapolate the DDO-optimized weights at 0.5B tokens to even larger scales,
revealing that domains like \texttt{WebText} and \texttt{Amazon Reviews} gain significance over 
clean, standardized data (\texttt{Wikipedia}, \texttt{Arxiv}) 
(see Fig.~\ref{fig:bert_additional}).
A plausible explanation is that ``diverse'' data provides broader topical coverage and linguistic styles, 
while “standard” data saturates more quickly. 
     

\begin{figure}[h!] 
    \centering
    \begin{subfigure}[b]{0.42\textwidth}
        \includegraphics[width=\textwidth]{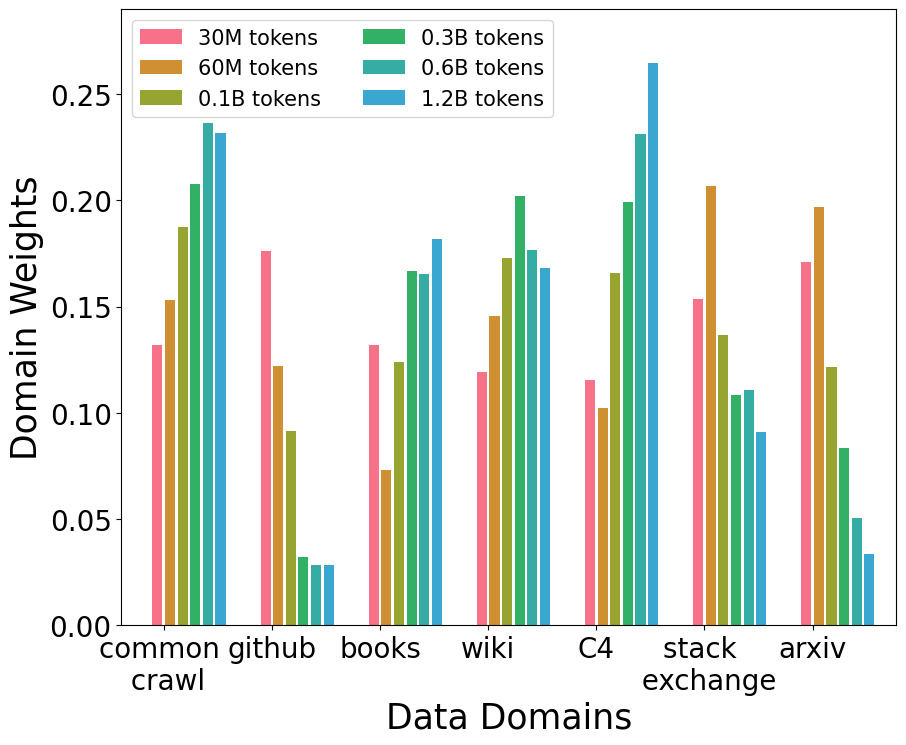}\vspace{-0.2em}
        \caption{DDO optimized domain weights.}
    \end{subfigure}
    \begin{subfigure}[b]{0.45\textwidth}
        \vspace{-0.5em}\includegraphics[width=\textwidth]{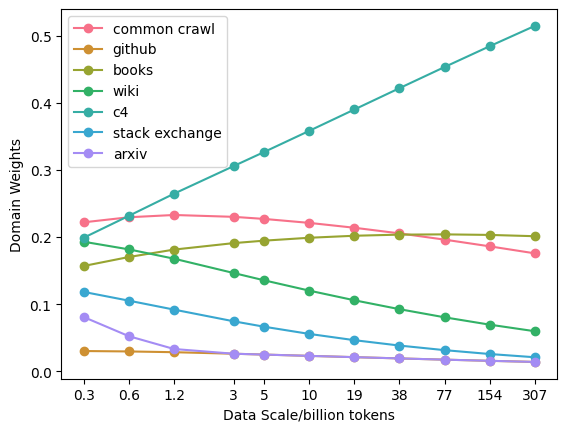}
        \caption{AutoScale projected domain weights.}
    \end{subfigure} \vspace{-0.5em}
     \caption{Domain importance evolves with training data scales. (GPT-2 Large)}
    \label{fig:figurex}\vspace{-0.5em}
\end{figure}

Note that these trends show \emph{how} our approach predicts domain importance may evolve, 
not a proof that each extrapolated mix \emph{guarantees} the best performance at its target scale. 
Nonetheless, the consistent shifting patterns across GPT-2~Large and BERT 
reinforce the idea that domain importance is \emph{scale-dependent}.

\vspace{-0.5em}\section{Conclusions} 
This paper explores how the importance of each training domain shifts across different scales 
and proposes a scale-aware framework (\autoscale) that outperforms existing approaches 
across various architectures, datasets, and training scales. 
Still, our experimental settings remain limited in scale and the diversity of evaluations. 
Extending this work to larger training budgets, additional data modalities, and broader benchmarks 
would clarify how well our insights generalize beyond the current scope. 
Another exciting next step is to adapt \autoscale~for 
directly optimizing downstream metrics, moving beyond perplexity 
as a rough proxy for language-model quality.

\section{Discussions}\vspace{-0.0em}
\paragraph{Online vs. offline data mixing methods.} 

These represent two independent paths to approach data mixing problems in LM pre-training. Current industrial practices are typically offline. Offline methods, such as DoReMi~\citep{xie2024doremi}, Doge~\citep{fan2023doge}, RegMix~\citep{liu2024regmix}, DML (Data Mixing Laws, \citet{ye2024data}), and our proposed AutoScale, decouple data curation from the final training process. \textit{This separation allows for the distribution of model development workloads across different teams (e.g., data, model, and training) working in parallel.} This approach is common in industrial settings for developing state-of-the-art LMs (e.g., Phi-4~\citep{abdin2024phi}, Llama4~\citep{meta2025llama}). Notably, many offline methods like DoReMi, DML, and BiMIX~\citep{ge2024bimix} originate from industrial research, whereas current online methods (ODM~\citep{albalak2023efficient}, Skill-it~\citep{chen2024skill}, Aioli~\citep{chen2024aioli}, ADO (Adaptive Data Optimization, \citet{jiang2024adaptive})) are primarily academic contributions.

Offline methods treat the training process integrally, primarily adjusting domain weights and training budgets before training commences. In contrast, online methods (e.g., ODM, Skill-it, Aioli, ADO) adjust domain weights dynamically during the training process. ODM, for instance, models online domain weighting as a bandit problem and updates domain weights based on local gradients. Considering the learning rate often changes throughout training, the loss landscape may be non-convex where a greedy approach cannot guarantee global optimality. Skill-It primarily considers fine-tuning scenarios, while Aioli extends this to small-scale LM pre-training. Aioli performs online estimation of the scaling law relationship between each pair of training and validation domains (in contrast to DML's offline estimation), while ADO considers scaling laws for individual training domains. The scaling law relationships these methods were originally proposed for adjusting training data size within a fixed training pipeline~\citep{kaplan2020scaling, hoffmann2022training}. They may not always accurately track a training process and its dynamics, especially with a varying learning rate, complicating their conceptual grounding. This creates space for future research contributions. For deployment in production pipelines, a better understanding on the predictability of the outcome under different training paradigms is desirable.

\paragraph{Effect of repeated documents and "data-constrained setting"}
The data constraint is relevant for real-world LM pre-training, particularly for low-resource domains. Beyond constraints on domain weights, practical limitations include the amount of available data and the maximum number of repetitions tolerated before significant performance degradation. Existing research on domain weighting has not explicitly modeled this factor. Despite this research gap, \citet{muennighoff2023scaling} offers positive evidence that current paradigms can still be effective under such constraints. The paper indicates that repeating data for up to 3 epochs has almost no negative impact, and models can still train reasonably well with up to 7 epochs of repetition. This suggests that in practice, repeating data from some domains a few times can be acceptable without significant detriment to performance. Current data mixing methods may be applied to these scenarios without major modifications.

\paragraph{Data mixing for model-size scaling}

Applying the data-scaling methodology to model-size scaling appears plausible. Scaling law papers~\citep{kaplan2020scaling, hoffmann2022training} reported similar scaling trends between validation loss and training data budget/model size. \citep{shukor2025scaling} proposes to model the combined effects of data and model-size scaling as a linear addition and validates its effectiveness in optimizing domain weights for LLM, native multimodal model (NMM), and large vision models (LVM) pretraining.

Additional discussions on \textbf{Intuition for Stage 2 of \autoscale, Optimal mix projection} and \textbf{Number of predictors (scaling law components)} can be found in Appendix \ref{app:diss}.

\section*{Impact Statement}

Reducing the complexity and resource requirements associated with pretraining LLMs, \autoscale~ contributes to the democratization of AI. Smaller organizations, academic institutions, and individual researchers can more easily participate in cutting-edge AI research and development, fostering innovation and collaboration across the AI community.  Moreover, learning from massive amounts of data requires large and costly computational resources, which not only consume substantial energy but also generate a significant carbon footprint, contributing to environmental issues. Furthermore, these resources quickly become obsolete due to the rapid pace of technological advancements, leading to e-waste. This research makes contributions to mitigating these issues by improving the efficiency of resource utilization in AI training.

\section*{Acknowledgement}
This work is supported in part by the National Science Foundation under grants IIS-2312794, IIS2313130, OAC-2239622, Amazon-Virginia Tech Initiative in Efficient and Robust Machine Learning, AWS computational credits, and the Commonwealth Cyber Initiative. The authors are grateful for Ankit Battawar and Alix Delgado from AWS, whose dedicated help and support were crucial for securing computing resources and implementing empirical studies.

\clearpage
\newpage

\bibliography{main_paper}
\bibliographystyle{icml2025}


\newpage
\onecolumn

\begin{appendices}

\startcontents[appendices]
\printcontents[appendices]{}{1}{\setcounter{tocdepth}{2}}

\clearpage

\section{Algorithms and Operational Pipeline}
\label{app:operation}

\begin{center}
\scalebox{1.0}{\begin{minipage}{1.0\textwidth}\vspace{-1.5em}
      \begin{algorithm}[H]
\caption{Direct Data Optimization (\textsc{DDO})}\label{alg:ddo}
\begin{algorithmic}
\Require $m$ domains (data sources) with data $D_1 \dots D_m$, data budget $N_0$ ($\ll$ for full-scale training), training dataset $S$, model parameters $\boldsymbol{\theta}$, validation loss $\mathcal{L}_v$, perturbation ratio $r>1$ (e.g., $r=3$).
\State Initialize weights for all domains $\forall i\in \{1,\dots m\}$: $w_i \gets 1/m$;
\State Initialize training data for all domains $\forall i\in \{1,\dots m\}$: sample $S_i \subset D_i$ where $|S_i|=w_i\cdot N$;
\State Train the model on data $S=\{S_1\dots S_m\}$ and evaluate its loss $\mathcal{L}_v^0 \gets \mathcal{L}_v(\boldsymbol{\theta}^*(S))$;
\For{$j$ from $1$ to $m$}
\State $w_j^+ \gets r\cdot w_j$; \Comment{Perturb domain weights (+)}
\State Resample $S_j^+ \subset D_j$ where $|S_j^+|=w_j^+\cdot N$;
\State Train the model on data $S=(\{S_1\dots S_m\}\setminus S_j)\cup S_j^+$ and evaluate its loss $\mathcal{L}_j^+ \gets \mathcal{L}_v(\boldsymbol{\theta}^*(S))$;
\State $w_j^- \gets \frac{1}{r}\cdot w_j$; \Comment{Perturb domain weights (-)}
\State Resample $S_j^- \subset D_j$ where $|S_j^-|=w_j^-\cdot N$;
\State Train the model on data $S=(\{S_1\dots S_m\}\setminus S_j)\cup S_j^-$ and evaluate its loss $\mathcal{L}_j^- \gets \mathcal{L}_v(\boldsymbol{\theta}^*(S))$;
\State OLS fit for scaling functions $N_0^i,\gamma_i,\ell_i = \arg\min_{N_0^i,\gamma_i,\ell_i} [\mathcal{L}_v^0-(N_0^i + N_i)^{-\gamma_i}-\ell_i]^2+[\mathcal{L}_{(+i)}-(N_0^i + N_i^+)^{-\gamma_i}-\ell_i]^2+[\mathcal{L}_{(-i)}-(N_0^i + N_i^-)^{-\gamma_i}-\ell_i]^2$;
\EndFor
\State Output optimized domain weights $\mathbf{w^*} =\arg\min_{\mathbf{w'}\in \mathbb{W}^m}\sum_{i=1}^m (N_0^i + w_i'\cdot N)^{-\gamma_i}$.
\end{algorithmic}
\end{algorithm}
\end{minipage}}
\end{center}

\begin{center}
\scalebox{1.0}{\begin{minipage}{1.0\textwidth}
      \begin{algorithm}[H]
\caption{\autoscale~}\label{alg:as}
\begin{algorithmic}
\Require Optimal domain weights (obtained from \textsc{DDO}) $\mathbf{w^{(1)*}}$at data scale $N^{(1)}$ and $\mathbf{w^{(2)*}}$ at data scale $N^{(2)}$, target data scale $N^{(t)}$, where $N^{(1)}<N^{(2)}<N^{(t)}$; resolution $\delta$.

\State Optimal domain data $\mathbf{N^*}(N^{(1)}) \gets \mathbf{w^{(1)*}}\cdot N^{(1)}$;
\State Optimal domain data $\mathbf{N^*}(N^{(2)}) \gets \mathbf{w^{(2)*}}\cdot N^{(2)}$;
\State Current data budget $N \gets \sum_i N_i^{(2)*}$;
\State Optimal domain data under current data budget $\mathbf{N^*}(N) \gets \mathbf{N^*}(N^{(2)})$;
\While{$N < N^{(t)} $}
\State Compute optimal domain data under the next data budget: $\mathbf{N^*}(N^{\text{next}}) \gets \mathbf{N^*}(N)[(\mathbf{N^*}(N^{(1)}))^{-1}\mathbf{N^*}(N^{(2)})]^\delta$;
\State Compute the next data budget $N^{\text{next}} \gets \sum_i N_i^{*}$;
\State Update current data budget $N \gets N^{\text{next}}$;
\EndWhile
\State Output predicted optimal domain weights: $\mathbf{\hat{w}^{(t)*}} \gets \mathbf{N^*}(N)/N$.
\end{algorithmic}
\end{algorithm}
\end{minipage}}
\end{center}


\textbf{Operational Pipeline (\textsc{DDO})}
\begin{enumerate}
\item Train a base proxy model with uniform weights (or reference weights, if available);
\item At each time, add/reduce data quantity for one domain and re-train the proxy model;
\item Fit power law scaling functions and solve the optimization problem;
\item Iterate the process if necessary.
\end{enumerate}

\clearpage

\textbf{Operational Pipeline (\autoscale~)}
\begin{enumerate}

\item For two smaller training data scales $N^{(1)}$ and $N^{(2)}$ where re-training the model is affordable, find their corresponding optimal training data compositions $\mathbf{N^*}(N^{(1)})$ and $\mathbf{N^*}(N^{(2)})$ using \textsc{DDO} Algorithm described above;
\item Initialize current data budget at $N=N^{(2)}$;
\item With the chosen resolution $\delta$, predict the next optimal training data composition as $\mathbf{N^*}(N^{\text{next}}) = \mathbf{N^*}(N)[(\mathbf{N^*}(N^{(1)}))^{-1}\mathbf{N^*}(N^{(2)})]^\delta$, yielding optimal domain weights $w_i^{*}=N_i^{*}(N^{\text{next}})/N^{\text{next}}$ at new training data scale $N^{\text{next}}=\sum_i N_i^{*}(N^{\text{next}})$;
\item Update current data budget to $N=N^{\text{next}}$. Repeat this process until the target training data scale is reached.
\end{enumerate}

\clearpage

\section{Proofs for Section~\ref{sec:thm}, Optimal mix projection}
\label{sec:appendix_proofs}

\subsection{Theorem 1:  Scale-Dependent Optimal Composition}\label{app:lemma1}

\begin{theorem} [Scaling Law for Optimal Data Compositions \textbf{(restated)}]
    Consider the following optimization problem
    \[
        \min_{\mathbf{N}} \left\{ \sum_{i=1}^m \beta_i N_i^{-\gamma_i} \Bigg| \sum_{i=1}^m N_i = N \right\}.
    \]
    For any two compute budgets $N^{(1)} \neq N^{(2)}$, let $\mathbf{N}^*(N^{(1)})$ and $\mathbf{N}^*(N^{(2)})$ be their respective minimizers. 
    For any third data composition $\mathbf{N}(N^{(3)})$, if there exists some constant $k\in\mathbb{R}^+$ such that
    \[
    \mathbf{N}(N^{(3)}) = \mathbf{N}^*(N^{(2)})[(\mathbf{N}^*(N^{(1)})) ^{-1} \mathbf{N}^*(N^{(2)})]^k, 
    \]
    then, $\mathbf{N}(N^{(3)})$ is the minimizer for data budget $N^{(3)}=\sum_{i=1}^m N^{(3)}_i$, given as 
         \[
        \mathbf{N}(N^{(3)}) = \arg\min_{\mathbf{N}} \left\{ \sum_{i=1}^m \beta_i N_i^{-\gamma_i} \Bigg| \sum_{i=1}^m N_i = N^{(3)} \right\}= \mathbf{N}^*(N^{(3)}).
    \]
\end{theorem}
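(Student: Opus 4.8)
The plan is to verify that the proposed allocation $\mathbf{N}(N^{(3)})$ satisfies the first-order (KKT) stationarity conditions for budget $N^{(3)}$ and then invoke convexity so that these conditions are sufficient for global optimality. First I would form the Lagrangian $\sum_i \beta_i N_i^{-\gamma_i} + \lambda(\sum_i N_i - N)$ and write the stationarity condition $\gamma_i \beta_i N_i^{-\gamma_i-1} = \lambda$ for each $i$. In the nondegenerate regime $\beta_i, \gamma_i > 0$ (so that $\beta_i N_i^{-\gamma_i} \to \infty$ as $N_i \to 0^+$, which forces an interior minimizer), this inverts to $N_i^*(\lambda) = (\gamma_i\beta_i/\lambda)^{1/(\gamma_i+1)}$. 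The central observation is that in logarithmic coordinates this is affine in $\log\lambda$: $\log N_i^*(\lambda) = a_i - b_i\log\lambda$ with $a_i = \log(\gamma_i\beta_i)/(\gamma_i+1)$ and $b_i = 1/(\gamma_i+1)$. Thus each budget $N$ is in bijection with a single scalar multiplier $\lambda$, and the optimal allocation is pinned down entirely by that scalar.

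Next I would rewrite the extrapolation formula in log space. Taking logs of $N_i(N^{(3)}) = N_i^*(N^{(2)})[N_i^*(N^{(2)})/N_i^*(N^{(1)})]^k$ gives the affine combination $\log N_i(N^{(3)}) = (1+k)\log N_i^*(N^{(2)}) - k\log N_i^*(N^{(1)})$, whose two coefficients sum to one. Substituting the affine form $\log N_i^* = a_i - b_i\log\lambda$ evaluated at the two multipliers $\lambda^{(1)},\lambda^{(2)}$ associated with the known optima, the $a_i$ contributions collapse (their net coefficient is $(1+k)-k=1$) and I obtain $\log N_i(N^{(3)}) = a_i - b_i\log\lambda^{(3)}$ with the single scalar $\log\lambda^{(3)} = (1+k)\log\lambda^{(2)} - k\log\lambda^{(1)}$, equivalently $\lambda^{(3)} = (\lambda^{(1)})^{-k}(\lambda^{(2)})^{1+k} > 0$ since $k>0$. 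The key point is that $\lambda^{(3)}$ is common to every coordinate $i$, so $\mathbf{N}(N^{(3)})$ lies exactly on the stationarity manifold $\{N_i^*(\lambda)\}_i$ for one admissible multiplier.

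Finally, since each $N_i^{-\gamma_i}$ has second derivative $\gamma_i(\gamma_i+1)N_i^{-\gamma_i-2}\ge 0$, the objective is convex on the positive orthant and the constraint is linear, so any point meeting the KKT conditions is a global minimizer. Because $\mathbf{N}(N^{(3)})$ satisfies stationarity with the valid multiplier $\lambda^{(3)}>0$ and, by hypothesis, meets the budget constraint $\sum_i N_i(N^{(3)}) = N^{(3)}$, it must be the optimal allocation $\mathbf{N}^*(N^{(3)})$. I expect the main obstacle to be the bookkeeping that makes the $a_i$ terms cancel and exposes a single shared $\lambda^{(3)}$ — this is precisely what turns a componentwise geometric extrapolation into a genuine optimizer — together with ruling out the degenerate cases ($\gamma_i=0$ or $\beta_i=0$, where stationarity breaks down and the minimizer may sit on the boundary), which I would handle separately or exclude by assumption.
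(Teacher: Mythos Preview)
Your proof is correct and rests on the same first-order/KKT machinery as the paper, but your organization is cleaner. The paper fixes a reference domain $b$, writes each $N_a^{*}$ in terms of $N_b^{*}$ via the pairwise equality of partials, and then tracks how the ratio $N_a^{(2)*}/N_a^{(1)*}$ depends on $N_b^{(2)*}/N_b^{(1)*}$ through the exponent $(\gamma_b+1)/(\gamma_a+1)$; your approach instead parametrizes the whole optimal manifold by the single Lagrange multiplier $\lambda$ and observes that $\log N_i^{*}$ is affine in $\log\lambda$, so any affine combination of two optimizers (in log space) with coefficients summing to one lands back on the manifold. This is the same content, but the $\lambda$-parametrization avoids singling out a reference coordinate and makes the ``common scalar'' structure manifest. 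You also supply two things the paper leaves implicit: an explicit convexity argument establishing that KKT stationarity is \emph{sufficient} for global optimality, and a remark on the degenerate boundary cases $\beta_i=0$ or $\gamma_i=0$. Both are worth keeping.
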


\begin{proof} \textbf{Setup:}  
We begin with the following optimization problem, defined at a given total training data scale \(N^{(1)}\):  
\[
\min_{\mathbf{N}} \left\{ \sum_{i=1}^m \beta_i N_i^{-\gamma_i} \,\middle|\, \sum_{i=1}^m N_i = N^{(1)} \right\}.
\]  
Here, \(\mathbf{N} = \mathrm{diag}\{N_1, N_2, \ldots, N_m\}\) is a diagonal matrix whose diagonal entries are the amounts of data allocated to each of the \(m\) domains.

For this problem, there exists a unique optimal solution \(\mathbf{N}^*(N^{(1)}) = \mathrm{diag}\{N_1^{(1)*}, N_2^{(1)*}, \ldots, N_m^{(1)*}\}\). This \(\mathbf{N}^*(N^{(1)})\) represents the compute-optimal data composition at the data scale \(N^{(1)}\).

\textbf{First-Order Conditions (KKT):}
At optimality, the Karush–Kuhn–Tucker (KKT) conditions ensure that the partial derivatives of the objective function with respect to each \(N_i\) are equal (up to the same Lagrange multiplier for the equality constraint \(\sum_i N_i = N^{(1)}\)). For any pair of domains \(a\) and \(b\), we must have:
\[
\left.\frac{\partial}{\partial N_a}\left(\sum_{i=1}^m \beta_i N_i^{-\gamma_i}\right)\right|_{N_a = N_a^{(1)*}}
= \left.\frac{\partial}{\partial N_b}\left(\sum_{i=1}^m \beta_i N_i^{-\gamma_i}\right)\right|_{N_b = N_b^{(1)*}}.
\]

Computing these derivatives, we get:
\[
-\beta_a \gamma_a (N_a^{(1)*})^{-\gamma_a - 1} = -\beta_b \gamma_b (N_b^{(1)*})^{-\gamma_b - 1}.
\]

From this equality:
\[
\frac{\beta_a \gamma_a}{\beta_b \gamma_b} 
= \frac{(N_a^{(1)*})^{\gamma_a + 1}}{(N_b^{(1)*})^{\gamma_b + 1}}.
\]

Rearranging, we obtain a fundamental scaling relationship:
\[
N_a^{(1)*} = \left(\frac{\beta_a \gamma_a}{\beta_b \gamma_b}(N_b^{(1)*})^{\gamma_b + 1}\right)^{\frac{1}{\gamma_a + 1}}.
\]

\textbf{Scaling to a Second Data Scale \(N^{(2)}\):}
Now consider a different total data scale \(N^{(2)} \neq N^{(1)}\), with the corresponding compute-optimal solution \(\mathbf{N}^*(N^{(2)}) = \mathrm{diag}\{N_1^{(2)*}, N_2^{(2)*}, \ldots, N_m^{(2)*}\}\).

Suppose we know how the optimal amount of data for domain \(b\) changes from \(N^{(1)}\) to \(N^{(2)}\). Specifically, let:
\[
N_b^{(2)*} = m \cdot N_b^{(1)*}
\]
for some scaling factor \(m > 0\).

Applying the same relationship used for the first scale, but now at the second scale, we find that for domain \(a\):
\[
N_a^{(2)*} = \left(\frac{\beta_a \gamma_a}{\beta_b \gamma_b} (N_b^{(2)*})^{\gamma_b + 1}\right)^{\frac{1}{\gamma_a + 1}}
= \left(\frac{\beta_a \gamma_a}{\beta_b \gamma_b} (m \cdot N_b^{(1)*})^{\gamma_b + 1}\right)^{\frac{1}{\gamma_a + 1}}.
\]

This simplifies to:
\begin{equation}\label{eq:kktna}
N_a^{(2)*} = m^{\frac{\gamma_b + 1}{\gamma_a + 1}} N_a^{(1)*}.
\end{equation}

Notice that \(m^{\frac{\gamma_b + 1}{\gamma_a + 1}} \neq m\) in general. Thus, when the budget scales by a factor \(m\) in domain \(b\), the optimal amount for domain \(a\) scales by a different factor. This shows that the optimal composition is scale-dependent.

\textbf{Predicting a Third Scale \(N^{(3)}\):}
We now know the optimal compositions at two scales \(N^{(1)}\) and \(N^{(2)}\). Consider a third scale \(N^{(3)}\) and its optimal solution \(\mathbf{N}^*(N^{(3)}) = \mathrm{diag}\{N_1^{(3)*}, N_2^{(3)*}, \ldots, N_m^{(3)*}\}\).

If we choose \(N_b^{(3)*}\) such that:
\begin{equation}\label{eq:pf-frac}
    \frac{N_b^{(3)*}}{N_b^{(2)*}} = \frac{N_b^{(2)*}}{N_b^{(1)*}},
\end{equation}
then the change in \(N_b\) from \(N^{(2)}\) to \(N^{(3)}\) mirrors the change from \(N^{(1)}\) to \(N^{(2)}\).

Since the scaling exponent \(\frac{\gamma_b + 1}{\gamma_a + 1}\) remains the same, this symmetrical setup leads to:
\[
N_a^{(3)*} = \frac{(N_a^{(2)*})^2}{N_a^{(1)*}}.
\]

\textbf{Matrix Form:}
Because all domains scale in a similar fashion, we can write this relationship compactly using diagonal matrices. Define:
\[
\mathbf{N^*}(N^{(i)}) = \mathrm{diag}\{N_1^{(i)*}, N_2^{(i)*}, \ldots, N_m^{(i)*}\}.
\]

The element-wise relationship \(\frac{(N_a^{(2)*})^2}{N_a^{(1)*}}\) for each domain \(a\) can be expressed as:
\[
\mathbf{N}^*(N^{(3)}) = \mathbf{N}^*(N^{(2)})(\mathbf{N}^*(N^{(1)}))^{-1}\mathbf{N}^*(N^{(2)}).
\]

Here, \((\mathbf{N}^*(N^{(1)}))^{-1}\) is the inverse of the diagonal matrix \(\mathbf{N}^*(N^{(1)})\), obtained by taking the reciprocal of each diagonal element \(N_a^{(1)*}\).

We have shown that given two distinct data scales \(N^{(1)}\) and \(N^{(2)}\) and their corresponding optimal solutions \(\mathbf{N}^*(N^{(1)})\) and \(\mathbf{N}^*(N^{(2)})\), one can construct a third optimal solution \(\mathbf{N}^*(N^{(3)})\) using the formula:
\[
\mathbf{N}^*(N^{(3)}) = \mathbf{N}^*(N^{(2)})(\mathbf{N}^*(N^{(1)}))^{-1}\mathbf{N}^*(N^{(2)}).
\]

This relationship holds without needing to explicitly estimate the parameters \(\gamma_i\) or \(\beta_i\), and it confirms that the optimal data composition is scale-dependent. Thus, the given scaling law for optimal data compositions is established.

\textbf{Generalization to Prediction for Any Data Scale:} Finally, we generalize from the case in Eq.~\eqref{eq:pf-frac} to allow prediction of optimal data composition for \textit{any} data scale. Consider for some constant $\forall k\in\mathbb{R}^+$, we choose $N_b^{(k)*}$ such that 
\begin{equation*}
    \frac{N_b^{(k)*}}{N_b^{(2)*}} = \left(\frac{N_b^{(2)*}}{N_b^{(1)*}}\right)^k=m^k,
\end{equation*}

Same as the procedure in Eq.~\eqref{eq:kktna}, KKT optimality conditions yield the corresponding optimal data quantity for domain a at the same scale as $N_b^{(k)*}$ as
\begin{equation*}
N_a^{(k)*} = (m^k)^{\frac{\gamma_b + 1}{\gamma_a + 1}} N_a^{(2)*}= \left(\frac{N_a^{(2)*}}{N_a^{(1)*}}\right)^k N_a^{(2)*}.
\end{equation*}
Rearranging in the matrix form, we have the following formula
\[
\mathbf{N}^*(N^{(k)}) = \mathbf{N}^*(N^{(2)})[\mathbf{N}^*(N^{(2)})(\mathbf{N}^*(N^{(1)}))^{-1}]^k,
\] 
which concludes the proof.

\textbf{Application in \textsc{AutoScale:}} Note that this formulate holds for any $k\in \mathbb{R}^+$. Thus, by scanning through the values of $k$, one can find optimal data composition $\mathbf{N}^*$ for any target data scale $N=\sum_i N^*_i$. \textit{In practice, for a target data scale $N>N^{(2)}=\sum_i N^{(2)*}_i$, one only needs to conduct a line search along $k>1$ to find the value of $k$ where $\sum_i N^{(k)*}_i=N$ to determine its corresponding optimal data composition $\mathbf{N}^*$.}
\end{proof}






\begin{remark} [An example]
    This example helps visualize the operation pipeline. 
    
    If at training data scale $N^{(1)}=N_a^{(1)}+N_b^{(1)}=200$, we have optimal domain data composition as $N_a^{(1)*}=100, N_b^{(1)*}=100$ ($50\%-50\%$); and at scale $N^{(2)}=N_a^{(2)}+N_b^{(2)}=500$, we have optimal domain data composition as $N_a^{(2)*}=300, N_b^{(2)*}=200$ ($60\%-40\%$). Then, from the theorem, when the optimal domain data composition has $N_a^{(3)*}=(N_a^{(2)*})^2/N_a^{(1)*}=900$, we can predict $N_b^{(3)*}=(N_b^{(2)*})^2/N_b^{(1)*}=400$, which gives the optimal ratio at $N^{(3)}=N_a^{(3)}+N_b^{(3)}=1300$ as $69\%-31\%$. 
    
    Similarly, 
    
\small 
For $N_a^{(4)*}=2700$, we have $N_b^{(4)*}=800$, which gives the optimal ratio at $N^{(4)}=3500$ as $77\%-23\%$\\
For $N_a^{(5)*}=8100$, we have $N_b^{(5)*}=1600$, which gives the optimal ratio at $N^{(5)}=9700$ as $84\%-16\%$\\
For $N_a^{(6)*}=24300$, we have $N_b^{(6)*}=3200$, which gives the optimal ratio at $N^{(6)}=27500$ as $88\%-12\%$\\
For $N_a^{(7)*}=72900$, we have $N_b^{(7)*}=6400$, which gives the optimal ratio at $N^{(7)}=79300$ as $92\%-8\%$\\
For $N_a^{(8)*}=218700$, we have $N_b^{(8)*}=12800$, which gives the optimal ratio at $N^{(8)}=231500$ as $94\%-6\%$\\
For $N_a^{(9)*}=656100$, we have $N_b^{(9)*}=25600$, which gives the optimal ratio at $N^{(9)}=681700$ as $96\%-4\%$\normalsize

We visualize it in Fig.~\ref{fig:example}.
\begin{figure}[h!]
\begin{center}
  \includegraphics[width=0.7\textwidth]{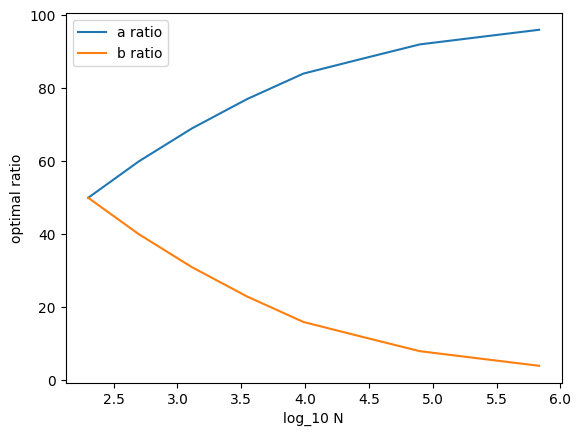}
  \vspace{-1em}
  \caption{Illustration: optimal data composition scales in exponential-style functions with training data quantity.
  }\label{fig:example}
  \vspace{-1em}
  \end{center}
\end{figure}
\end{remark}

\subsection{Scaling Latent Skills}\label{app:theorem1}
We extend this theory to a general case where the evaluation loss is the perplexity averaged over training domains.
Consider the evaluation is composed of a number of \textit{independent} sub-tasks ("latent skills" \citep{tiong2024toward}) which are hidden variables, where each of them observes a power law scaling law relationship with the amount of data contributing to this task ("equivalent data size"), $
\mathcal{L} =\ell_0+\beta_a\cdot K_a^{-\gamma_a}+\beta_b\cdot K_b^{-\gamma_b}+\beta_c\cdot K_c^{-\gamma_c} + \cdots
$
where scalar $K_j\geq 0$ denote equivalent data size for \textit{skill$_j$}, and constants $(\beta_j, \gamma_j)\geq 0$ are coefficients associated with \textit{skill$_j$}, respectively. Mathematically, these latent skills can be seen as an orthogonal basis that spans the space of evaluation loss.

Consider training data from each domain $D_i$ contributes to these skills to varying degrees, where Equivalent data size for \textit{skill$_j$}, $K_j$, is given as
$
K_j = c_{j,1}\cdot N_1 + c_{j,2}\cdot N_2 + \cdots
$
where $N_i=w_i\cdot N$ denotes the amount of training data from domain $D_i$ and constant $c_{j,i}$ is the coefficient measuring the degree of contribution between domain $D_i$ and \textit{skill$_j$}.
Defining diagonal matrices for training data composition $\mathbf{N}=diag\{N_1,N_2, \cdots\}$ and skill data composition $\mathbf{K}=diag\{K_a,K_b,\cdots\}$, we have 
$
\mathbf{K} = \mathbf{A}\mathbf{N}
$,
where $\mathbf{A}_{ji}=c_{j,i}$ is the matrix for coefficients. 
For simplicity, we consider training data from each domain will be \textit{distributed} to the skills such that $\forall i, \sum_j N_i = 1 $. This gives the amount of total training data from all domains is identical to the amount of total equivalent data for all skills, $\sum_j K_j = \sum_i N_i$.
For a training data scale $N=\sum_i N_i=\sum_j K_j$, define optimal skill data composition $\mathbf{K^*}=diag\{K_a^*, K_b^*, \cdots\}$ as the minimizer of $\mathcal{L}$, given as
$
    \mathbf{K^*} = {\arg\min}_{\sum_j K_j=N} \ell_0+\beta_a\cdot K_a^{-\gamma_a}+\beta_b\cdot K_b^{-\gamma_b}+ \cdots
$. 
Theoretically, there can be an infinite number of latent skills. For analysis, we consider a finite number of $k$ independent skills \textit{most important} for the evaluation. This can considered as performing Principal Components Analysis (PCA) with orthogonal transformation and selecting the first $k$ independent components. We consider the standard scenario with an equal number of relevant skills and data domains where $k=m$ and $\mathbf{A}$ is a square matrix with full rank. This describes the case where this optimization problem is well-defined. We discuss in App.~\ref{app:theorem1} what will happen in other scenarios. 
In this case, $\mathbf{A}$ is invertible and the corresponding optimal training data composition for $\mathbf{K^*}$ can be given as
$
\mathbf{N^*} =  \mathbf{A}^{-1}\mathbf{K^*}
$.

We provide the following theorem, which states that for the scenario described above, optimal training data composition scales in exponential-style functions with training data quantity and can be directly predictable from that of smaller scales \textit{without needing to identify the latent skills}.



\renewcommand{\thetheorem}{2}
\begin{theorem} [Scaling Latent Skills]
Consider the evaluation is composed of a number of \textit{independent} sub-tasks ("latent skills") where each of them observes a power law scaling law relationship with the amount of data contributing to this task ("equivalent data size"). Namely, 

\[
\mathcal{L} =\ell_0+\beta_a\cdot K_a^{-\gamma_a}+\beta_b\cdot K_b^{-\gamma_b}+\beta_c\cdot K_c^{-\gamma_c} + \cdots
\]

where scalar $K_j\geq 0$ denote equivalent data size for \textit{skill$_j$}, and constants $(\beta_j, \gamma_j)\geq 0$ are coefficients associated with \textit{skill$_j$}, respectively.
Define diagonal matrices for training data composition $\mathbf{N}=diag\{N_1,N_2, \cdots\}$ and skill data composition $\mathbf{K}=diag\{K_a,K_b, \cdots\}$. Consider training data from each domain $D_i$ contributes to these skills to varying degrees, given as $\mathbf{K} = \mathbf{A}\mathbf{N}$ where $\mathbf{A}_{ji}=c_{j,i}$ is the matrix for coefficients. 
Assume the amount of total training data from all domains is identical to the amount of total equivalent data for all skills, $\sum_j K_j = \sum_i N_i$. Assume there is a finite number of latent skills and data domains and $\mathbf{A}$ is a square matrix with full rank. 

For a training data scale $N=\sum_i N_i=\sum_j K_j$
, define optimal skill data composition $\mathbf{K^*}=diag\{K_a^*, K_b^*, \cdots\}$ as the minimizer of $\mathcal{L}$ s.t. $\sum_j K_j=N$ with corresponding optimal training data composition.
If we have optimal data compositions $\mathbf{N^*}(N^{(1)})=diag\{N_a^{(1)*}, N_b^{(1)*}, \cdots\}$ 
where its corresponding skill data composition $\mathbf{K^{(1)*}}=diag\{K_a^{(1)*}, K_b^{(1)*}, \cdots\}=\mathbf{A}\mathbf{N^*}(N^{(1)})$ minimizes $\mathcal{L}$ s.t. $\sum_j K_j=\sum_i N^{(1)*}=N^{(1)}$, and $\mathbf{N^*}(N^{(2)})=diag\{N_a^{(2)*}, N_b^{(2)*},...\}$ 
where its corresponding skill data composition $\mathbf{K^{(2)*}}=diag\{K_a^{(2)*}, K_b^{(2)*},...\}=\mathbf{A}\mathbf{N^*}(N^{(2)})$ minimizes $\mathcal{L}$ s.t. $\sum_j K_j^{(2)*}=\sum_i N^{(2)*}=N^{(2)}$ where $N^{(2)}\neq N^{(1)}$, 
then, other optimal data compositions $\mathbf{N^*}(N^{(3)})=diag\{N_a^{(3)*}, N_b^{(3)*},...\}$ 
where the corresponding skill data composition $\mathbf{K^{(3)*}}=diag\{K_a^{(3)*}, K_b^{(3)*}, \cdots\}=\mathbf{A}\mathbf{N^*}(N^{(3)})$ minimizes $\mathcal{L}$ s.t. $\sum_j K_j^{(3)*}=\sum_i N^{(3)*}=N^{(3)}$ where $N^{(3)}\neq N^{(2)}\neq N^{(1)}$ 
must satisfy
\[
\mathbf{N^*}(N^{(3)}) = \mathbf{N^*}(N^{(2)})[(\mathbf{N^*}(N^{(1)}))^{-1}\mathbf{N^*}(N^{(2)})]^k
\]
for some $k\in\mathbb{R}^+$.


\end{theorem}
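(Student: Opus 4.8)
The plan is to reduce Theorem~2 to Theorem~1 by passing to the skill coordinates, where the objective is separable and the earlier result applies verbatim. First I would exploit the two structural assumptions on $\mathbf{A}$: because $\mathbf{A}$ is a full-rank square matrix, the map $\mathbf{N}\mapsto\mathbf{K}=\mathbf{A}\mathbf{N}$ is a bijection, and because each column of $\mathbf{A}$ sums to one (the assumption $\sum_j K_j=\sum_i N_i$), this bijection carries the feasible set $\{\mathbf{N}:\sum_i N_i = N\}$ onto $\{\mathbf{K}:\sum_j K_j = N\}$. Consequently, minimizing $\mathcal{L}(\mathbf{A}\mathbf{N})$ over domain compositions at budget $N$ is equivalent to minimizing the separable objective $\ell_0+\sum_j \beta_j K_j^{-\gamma_j}$ over skill compositions at the same budget, with the two minimizers related by $\mathbf{K}^*(N)=\mathbf{A}\,\mathbf{N}^*(N)$, i.e. $\mathbf{N}^*(N)=\mathbf{A}^{-1}\mathbf{K}^*(N)$.

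Next I would invoke Theorem~1 directly in skill space. The reduced problem $\min\{\sum_j \beta_j K_j^{-\gamma_j}\mid \sum_j K_j = N\}$ is exactly the optimization treated there (the additive constant $\ell_0$ does not affect the $\arg\min$), with the latent skills playing the role of the domains. Hence, given the optimal skill compositions $\mathbf{K}^*(N^{(1)})$ and $\mathbf{K}^*(N^{(2)})$ at two distinct budgets, Theorem~1 yields for the third budget the exponential-style relation
\[
  \mathbf{K}^*(N^{(3)})
  \;=\;
  \mathbf{K}^*(N^{(2)})\bigl[(\mathbf{K}^*(N^{(1)}))^{-1}\mathbf{K}^*(N^{(2)})\bigr]^{k}
\]
for some $k\in\mathbb{R}^+$. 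The underlying reason, which I would re-derive from the KKT stationarity $\beta_j\gamma_j K_j^{-\gamma_j-1}=\mu$, is that each $K_j^*$ is a single power $(\beta_j\gamma_j/\mu)^{1/(\gamma_j+1)}$ of the budget-dependent multiplier $\mu$, so one scalar $k=\log(\mu^{(3)}/\mu^{(2)})/\log(\mu^{(2)}/\mu^{(1)})$ ties all three compositions together simultaneously, independently of $j$.

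Finally I would transfer this relation back to the domain coordinates through $\mathbf{N}^*(N^{(s)})=\mathbf{A}^{-1}\mathbf{K}^*(N^{(s)})$, substituting the three skill compositions into the displayed identity to recover the claimed formula for $\mathbf{N}^*$. I expect this last step to be the main obstacle, and the place where the diagonal-matrix formalism of the setup must be used with care: the scaling identity is an \emph{element-wise} multiplicative statement, whereas $\mathbf{A}^{-1}$ acts by \emph{linear} mixing of coordinates, and in general a linear combination of distinct power laws in $\mu$ is not itself a single power law. Making the transfer rigorous therefore requires pinning down exactly which properties of $\mathbf{A}$ let the element-wise exponentiation $[\cdot]^{k}$ survive the change of coordinates (in particular, the compatibility of $\mathbf{A}$ with the diagonal reparameterization), and it also forces one to handle the nonnegativity constraints $N_i\ge 0$, which the skill-space reduction does not automatically preserve. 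I would accordingly devote most of the effort to this coordinate-transfer step rather than to the routine KKT computation of Step~2.
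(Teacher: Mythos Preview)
Your three-step plan---pass to skill coordinates via the invertible map $\mathbf{A}$, apply Theorem~1 to the separable problem in $\mathbf{K}$, and pull the resulting identity back through $\mathbf{A}^{-1}$---is exactly the paper's argument. The coordinate-transfer step you flag as the main obstacle is handled in the paper in two lines via the diagonal-matrix encoding: substituting $\mathbf{K}^{(s)*}=\mathbf{A}\mathbf{N}^*(N^{(s)})$ into the Theorem~1 identity and cancelling inside the bracket as $(\mathbf{A}\mathbf{N}^*(N^{(1)}))^{-1}\mathbf{A}\mathbf{N}^*(N^{(2)})=(\mathbf{N}^*(N^{(1)}))^{-1}\mathbf{A}^{-1}\mathbf{A}\mathbf{N}^*(N^{(2)})=(\mathbf{N}^*(N^{(1)}))^{-1}\mathbf{N}^*(N^{(2)})$, then left-multiplying by $\mathbf{A}^{-1}$; the concerns you raise about whether the elementwise power genuinely survives a non-diagonal change of coordinates, and about nonnegativity, are not addressed in the paper's proof.
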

\begin{proof}
By definition, we have
    \begin{equation*}
\begin{aligned}
\mathbf{A}\mathbf{N^*}(N^{(1)}) = \mathbf{K^{(1)*}},\quad
\mathbf{A}\mathbf{N^*}(N^{(2)}) = \mathbf{K^{(2)*}},\quad
\mathbf{A}\mathbf{N^*}(N^{(3)}) = \mathbf{K^{(3)*}}
\end{aligned}
\end{equation*}
From results of Theorem 1 in Section~\ref{thm:thm1}, we have
\begin{equation*}
    \mathbf{K^{(3)*}}=\mathbf{K^{(2)*}}[(\mathbf{K}^{(1)*})^{-1}\mathbf{K^{(2)*}}]^k
\end{equation*}
for some $k\in\mathbb{R}^+$, 
which gives
\begin{equation*}
    \mathbf{A}\mathbf{N^*}(N^{(3)})=(\mathbf{A}\mathbf{N^*}(N^{(2)}))[(\mathbf{A}\mathbf{N^*}(N^{(1)}))^{-1}\mathbf{A}\mathbf{N^*}(N^{(2)})]^k
\end{equation*}

Since $\mathbf{A}$ is invertible and $\mathbf{N}$ and $\mathbf{K}$ are diagonal matrices, naturally, 
\begin{equation*}
\begin{aligned}
(\mathbf{A}\mathbf{N^*}(N^{(1)}))^{-1} = (\mathbf{N^*}(N^{(1)}))^{-1}\mathbf{A}^{-1}
\end{aligned}
\end{equation*}
and we have
\begin{equation*}
    \mathbf{A}\mathbf{N^*}(N^{(3)})=\mathbf{A}\mathbf{N^*}(N^{(2)})[(\mathbf{N^*}(N^{(1)}))^{-1}\mathbf{A}^{-1}\mathbf{A}\mathbf{N^*}(N^{(2)})]^k=\mathbf{A}\mathbf{N^*}(N^{(2)})[(\mathbf{N^*}(N^{(1)}))^{-1}\mathbf{N^*}(N^{(2)})]^k
\end{equation*}

This directly gives
\begin{equation*}
    \mathbf{N^*}(N^{(3)})=\mathbf{A}^{-1}\mathbf{A}\mathbf{N^*}(N^{(2)})[(\mathbf{N}^*(N^{(1)}))^{-1}\mathbf{N^*}(N^{(2)})]^k=\mathbf{N^*}(N^{(2)})[(\mathbf{N}^*(N^{(1)}))^{-1}\mathbf{N^*}(N^{(2)})]^k
\end{equation*}
which completes the proof.




The above result does not require identifying the latent skills or observing skill data compositions $\mathbf{K}$. Rather, the theorem gives that as long as the coefficient matrix $\mathbf{A}$ is invertible, the scaling of $\mathbf{N}$ complies to the same scaling law as in Sec.~\ref{sec:thm}.
\end{proof}

\begin{remark} [what happens when $\mathbf{A}$ is not invertible.]
In general, if $\mathbf{A}$ is not invertible, scaling for optimal training data composition is not directly predictable. Specifically, if $\mathbf{A}$ does not have full rank, there exists redundant domains/data sources where their contribution to the skills are identical/exact multipliers of each other. Some data sources may not be needed at any scale; if $\mathbf{A}$ has more rows than columns (more domains than skills), this suggests multiple training data compositions can achieve the same skills data composition and the optimal training data compositions are non-unique (infinitely many).
If $\mathbf{A}$ has more columns than rows (more skills than domains), this means there are too many skills to optimize for. No optimal training data composition exists and one has to make trade-offs. If this is relevant to the practical needs, training data may be processed with additional techniques such as clustering and split into more different domains.
\end{remark}
\clearpage

\section{Experimental Details and Additional Results for Section~\ref{sec:eval}, Evaluation}\label{app:eval}

\subsection{Experimental Details on GPT-2 Large (774M)}
\label{sec:appendix_exp_details_gpt}

\textbf{Evaluation}
We test the perplexity on the held-out dataset, comprising 10K samples each from the 7 domains. For downstream tasks, we include: \texttt{BoolQ} \citep{clark2019boolq} (zero-shot), \texttt{HellaSwag} \citep{zellers2019hellaswag} (zero-shot, 10-shot), \texttt{PIQA} \citep{bisk2020piqa} (zero-shot), \texttt{TruthfulQA} \citep{lin2021truthfulqa} (zero-shot), \texttt{PubMedQA} \citep{jin2019pubmedqa} (10-shot), \texttt{CrowsPairs} \citep{nangia2020crows} (25-shot), and \texttt{ARC-Easy} \citep{clark2018think} (zero-shot). Additionally, \texttt{BBH Novel Concepts} \citep{srivastava2022beyond} task is added to the aggregated results for models trained beyond 10B tokens, making a total of 9 tasks. We select tasks that ensure the model's performance surpasses random guessing, spanning from question answering and commonsense inference to bias identification and scientific problem solving. These tasks provide a comprehensive assessment of model performance \citep{mehta2024openelm,gadre2024language}.
We adopt the evaluation framework from \citep{gao2021framework}. 

\textbf{Baselines}
We report results for our methods (\textsc{DDO} and \autoscale~) and 6 baselines–\textsc{Uniform}, \textsc{LLaMA weights} (curated), \textsc{DoReMi} (LLaMA weights initialization), \textsc{Data Mixing Laws from \citep{ye2024data}},  \textsc{DoReMi} from \citet{xie2024doremi} (uniform initialization), and \textsc{RegMix} from \citet{liu2024regmix}. Uniform weights uniformly sample data from all domains, resulting in the same number of training tokens from each domain. LLaMA weights are a set of curated domain weights heuristically tuned for training LLaMA-1/2 models. We implemented \textsc{DoReMi} proposed in \citep{xie2024doremi}. \textsc{DoReMi} trains two smaller-scale auxiliary models (proxy models). First, a reference model is trained with the dataset's original domain weights, which are the LLaMA weights for \texttt{RedPajama} dataset. Then, optimized domain weights are obtained by using a proxy model to minimize the worst-case excess loss across different domains. We train both auxiliary models for 50K steps. Implementation details are available in App.~\ref{sec:appendix_baseline_details}. Besides, we compare with 2 domain weights from existing literature, which are optimized on the same dataset, \texttt{RedPajama}, with similar Decoder-only LMs. \textsc{Data Mixing Laws} \citep{ye2024data} first performs a grid search on the space of possible data mixtures and records evaluation loss for proxy models trained on these mixtures. Then, the loss is interpolated with exponential functions to find the optimal domain weights for the proxy model. \textsc{DOGE} \citep{fan2023doge} also implements \textsc{DoReMi} \citep{xie2024doremi} with auxiliary models trained for 50K steps but with the reference model trained with uniform weights. \textsc{RegMix}~\citep{liu2024regmix} first trains an array of smaller, proxy models on different data mix and small data scales, abd fits a regression model between domain weights and evaluation loss. Then, the fitted regression model is used to predict the evaluation loss for all feasible domain weights to find the best-performing weights. We use the same pairs of domain weights and evaluation loss DDO used in optimizing domain weights for 774M Decoder-only LMs at 0.3B tokens to fit \textsc{RegMix}'s LightGBM regressor. The fitted LightGBM model is then used to optimize the evaluation loss over domain weights. We evaluate the model trained on these domain weights to present a complete landscape.

\paragraph{Model Training}

\texttt{GPT-2 Large} is a variant of the \texttt{GPT-2} architecture, featuring an embedding dimension of 1280, 36 transformer layers, and 20 attention heads. We rely on the Hugging Face Transformers library for implementation \citep{wolf2019huggingface}. Specific training hyperparameters are detailed in Table \ref{tab:gpt2-large hyperparams}.

\begin{table}[h]
\centering{
\begin{tabular}{lr}
\toprule
Architecture & gpt2\\
Optimizer & AdamW \\
 Tokenizer Vocabulary Size&$50257$\\

Batch Size Per Device & $1$\\
 Gradient Accumulation Steps&$10$\\
Maximum Learning Rate & 2e-4\\
LR Schedule & Linear \\
Weight Decay & 1e-2 \\
Warm-up Ratio& $10\%$\\
Epochs & $3$\\
GPU Hardware & 8x NVIDIA A100/8x NVIDIA H100\\
\bottomrule
\end{tabular}
\caption{The list of hyperparameters for \texttt{GPT-2 Large} pretraining.}

\label{tab:gpt2-large hyperparams}}
\end{table} 


\paragraph{Dataset Details}

The \texttt{RedPajama} dataset is available at: \url{https://huggingface.co/datasets/togethercomputer/RedPajama-Data-1T}.
The 7 domains involved are characterized as follows: 

\begin{itemize}
\item \textbf{\texttt{Commoncrawl}}: A vast repository of web-crawled data, providing a heterogeneous mix of internet text.
\item \textbf{\texttt{C4}}: The Colossal Clean Crawled Corpus, filtered to remove low-quality content, thus ensuring the reliability and cleanliness of the data.
\item \textbf{\texttt{GitHub}}: This domain includes a compilation of publicly available code repositories, offering a rich source of syntactic and semantic patterns inherent in programming languages.
\item \textbf{\texttt{Books}}: A collection of textual content from published books, providing diverse narrative styles and complex character developments.
\item \textbf{\texttt{ArXiv}}: Comprising scientific papers primarily from the fields of physics, mathematics, computer science, and quantitative biology, this domain offers high-quality, scholarly content.
\item \textbf{\texttt{Wikipedia}}: A well-organized and meticulously curated dataset of encyclopedia articles, delivering a broad spectrum of knowledge across multiple disciplines. We only use English samples with 'en' in meta-data.
\item \textbf{\texttt{StackExchange}}: This domain captures a variety of user-generated content from discussions and question-answer sessions across numerous technical topics.
\end{itemize}
Given copyright restrictions with the \texttt{Books} domain on Hugging Face, we have opted for an alternative source available at \url{https://yknzhu.wixsite.com/mbweb}.

For each domain, we ensure only samples with more than 1000 characters are retained. For each sample, the first 1000 characters are truncated, with the exception of the \texttt{ArXiv} and \texttt{GitHub} domains where we randomly extract a continuous block of 1000 characters. For the \texttt{Wikipedia} domain, we keep only those samples that are in English. Samples are selected without replacement, based on the computed data volume for each domain. Additionally, for each domain, a held-out dataset comprising 10K samples is reserved to evaluate the perplexity of the pretrained model. 

\subsection{Experimental Details on BERT (110M)}
\label{sec:appendix_exp_details_bert}

We evaluate the model's MLM loss on held-out validation datasets, comprising 10K samples each from the 5 training domains. Additionally, as an auxiliary evaluation, we test the MLM loss on 3 non-training held-out domains. To be consistent with the perplexity loss used in CLM, we report the exponential cross-entropy loss for MLM. We evaluate the model's task performance on \texttt{GLUE} benchmark \citep{wang2018glue} (with 8 diverse tasks for natural language understanding (NLU)) and \texttt{SQuAD} \citep{rajpurkar2016squad} (a large-scale QA dataset). Uniform weights are used as the baseline.

\paragraph{Model Training} We employ the \texttt{BERT-base-uncased} model from the Hugging Face Transformers library. Originally, \texttt{BERT}'s pretraining scheme involved MLM and next sentence prediction (NSP); however, in our experiments, we exclusively utilize MLM. Detailed training hyperparameters can be found in Table \ref{tab:bert hyperparams}.
\begin{table}[h]
\centering{
\begin{tabular}{lr}
\toprule
Architecture & bert-base-uncased \\
Max Token Length & $300$ \\
Mask Token Percentage & $15$\% \\
Optimizer & AdamW \\

Batch Size Per Device & $12$ \\
Devices & $4$ \\
Maximum Learning Rate & 1e-4 \\
LR Schedule & Linear \\
Weight Decay & 1e-2 \\
Warm-up Steps & $3000$\\
Epochs & $1\sim4$ \\
GPU Hardware & 4x NVIDIA RTX A6000\\
\bottomrule
\end{tabular}

 \caption{The list of hyperparameters for \texttt{BERT} pretraining.}
\label{tab:bert hyperparams}}
\end{table}

\paragraph{Dataset Details}

The 5 domains of training data utilized are listed as follows:

\begin{itemize}
\item \textbf{\texttt{Amazon Reviews}}: A compilation of customer reviews from Amazon, widely utilized in sentiment analysis studies, available at: \url{https://huggingface.co/datasets/amazon_us_reviews}.
\item \textbf{\texttt{Arxiv}}: Comprises 1.7 million articles from arXiv, available at: \url{https://www.tensorflow.org/datasets/catalog/scientific_papers}.
\item \textbf{\texttt{Books}}: A corpus of 11,038 novels by unpublished authors across 16 genres, available at: \url{https://yknzhu.wixsite.com/mbweb}.
\item \textbf{\texttt{Wikipedia}}: Offers datasets extracted from Wikipedia in various languages, available at: \url{https://www.tensorflow.org/datasets/catalog/wikipedia}. We only use English samples with 'en' in meta-data.
\item \textbf{\texttt{Open WebText Corpus (OWTC)}}: A corpus of English web texts from Reddit posts, available at: \url{https://skylion007.github.io/OpenWebTextCorpus/}.
\end{itemize}

3 held-out non-training domains used in the evaluation include:
\begin{itemize}
    \item \textbf{\texttt{Pubmed}}: Features 19,717 diabetes-related publications from the PubMed database, organized into three classes and linked by a network of 44,338 citations, available at: \url{https://www.tensorflow.org/datasets/catalog/scientific_papers}
    \item \textbf{\texttt{News}}: Comprises a significant collection of news articles derived from \texttt{CommonCrawl}, specifically from 5000 news domains indexed by Google News, available at: \url{https://github.com/rowanz/grover/blob/master/realnews/README.md}
    \item \textbf{\texttt{GitHub}}: A curated selection from the \texttt{RedPajama} dataset, this segment includes an array of open-source code projects, available at: \url{https://huggingface.co/datasets/togethercomputer/RedPajama-Data-1T}
\end{itemize}

\subsection{Implementation Details for Baselines}
\label{sec:appendix_baseline_details}

\paragraph{Implementation details} We followed the official implementation\footnote{\url{https://github.com/sangmichaelxie/doremi}} of \textsc{DoReMi} for our experiments. We evaluated two sets of reference domain weights: (1) the domain weights utilized in the LLaMA-2 paper~\cite{touvron2023llama} (referred to as LLaMA weights), and (2) uniform weights. Both the reference and proxy models have 120M parameters and are trained from scratch. We use \texttt{GPT-2} tokenizer with a vocabulary size of roughly 50K. For LLaMA weights, we train each model for 20K, 50K and 200K steps for comparison. For uniform weights, we train each model for 10K, 20K and 50K steps. Refer to Table \ref{tab:doremi} for detailed hyperparameters.  The effect of reference weights on the output \textsc{DoReMi} is discussed in Fig.\ref{fig:doremi_ref_weights}.

\begin{table}[h]
\centering{
\begin{tabular}{lr}
\toprule
Architecture & Decoder-only LM \\
Max Token Length & $1024$ \\
Optimizer & AdamW \\

Batch Size Per Device & $8$ \\
Devices & $8$ \\
Maximum Learning Rate & 2e-4 \\
LR Schedule & Linear \\
Weight Decay & 1e-2 \\
Warm-up Steps & $3000$\\
Epochs & $1$ \\
GPU Hardware & 8x NVIDIA RTX A6000\\
\bottomrule
\end{tabular}
\caption{The list of hyperparameters for \textsc{DoReMi}.}

\label{tab:doremi}}
\end{table}








\subsection{Evaluation Details}
\label{sec:appendix_evaluation}

\paragraph{GPT/CLM}
The following tasks are considered for downstream performance evaluation, in line with the setup from \citep{mehta2024openelm,gadre2024language}. For few-shot tasks, the demonstrations are sampled at random.
\begin{itemize}
    \item \textbf{\texttt{BoolQ}} \citep{clark2019boolq} consists of a question-answering format that requires binary yes/no answers.
    \item \textbf{\texttt{HellaSwag}} \citep{zellers2019hellaswag} challenges models on their ability to make commonsense inferences.
    \item \textbf{\texttt{PIQA}} \citep{bisk2020piqa} focuses on evaluating a model's commonsense reasoning regarding physical interactions. 
    \item \textbf{\texttt{TruthfulQA}} \citep{lin2021truthfulqa} is designed to assess the ability of models to generate truthful and factual responses.
    \item \textbf{\texttt{PubMedQA}} \citep{jin2019pubmedqa} offers a dataset for evaluating question-answering in the biomedical domain.
    \item \textbf{\texttt{CrowsPairs-English}} \citep{nangia2020crows} tests models on their ability to identify and correct stereotypical biases in English text.
    \item \textbf{\texttt{ARC-Easy}} \citep{clark2018think} presents a set of relatively simpler scientific reasoning questions, aimed at evaluating a model's basic understanding of scientific principles.
    \item \textbf{\texttt{BigBench-Novel Concepts}} \citep{srivastava2022beyond} serves as a test of the model's creative abstraction skills, challenging it to make sense of scenarios that it could not have memorized during training.

\end{itemize}


\paragraph{BERT/MLM}

For each task, we conduct supervised fine-tuning on the corresponding training data and test the fine-tuned model on the validation data. The hyperparameters for supervised fine-tuning are given in Table \ref{tab: bert evaluation}.

\begin{table}[h]
\centering{
\begin{tabular}{lr}
\toprule
Architecture & bert-base-uncased \\
Max Token Length & $128$ \\
Batch Size Per Device & $8$ or $300$ \\
Optimizer & AdamW \\
Devices & $4$ \\
Maximum Learning Rate & 2e-5 or 5e-5 \\
Epochs & $3$ \\
GPU Hardware & 4x NVIDIA RTX A6000\\
\bottomrule
\end{tabular}
}
\caption{The list of hyperparameters for supervised fine-tuning of \texttt{BERT}.}
\label{tab: bert evaluation}

\end{table}

\subsection{Additional Results on GPT-2 Large (774M)}
\label{sec:appendix_additional_results_gpt}

\begin{figure}[h!] 
    \centering
    \begin{subfigure}[b]{0.42\textwidth}
        \includegraphics[width=\textwidth]{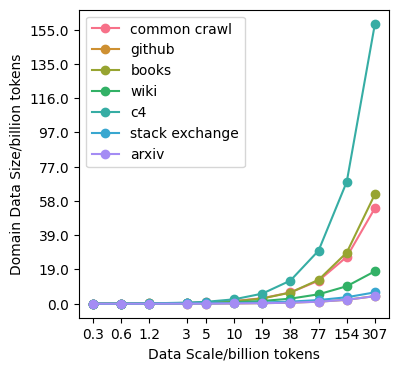}
        \caption{\autoscale~-predicted optimal data quantity for each domain as training data scales up.}
    \end{subfigure}
    \hspace{1em}
    \begin{subfigure}[b]{0.51\textwidth}
        \includegraphics[width=\textwidth]{gptfigs/gptdmw.png}
        \caption{\autoscale~-predicted optimal domain weights as training data scales up.}
    \end{subfigure}
    \caption{\autoscale~-predicted domain weights for training 774M Decoder-only LMs. Optimal data quantity for each domain grows in exponential-style functions with training data scale (left) where data sources with diverse samples (e.g., \texttt{C4}) are upweighted relative to domains with standard format (e.g., \texttt{Wikipedia}).} 
    \label{fig:figure4}
\end{figure}

Fig.~\ref{fig:figure4} depicts \autoscale-predicted domain weights for training 774M Decoder-only LMs (GPT-2 Large). Optimal data quantity for each domain grows in exponential-style functions with training data scale (left) where data sources with diverse samples (e.g., \texttt{C4}) are upweighted relative to domains with standard format (e.g., \texttt{Wikipedia}).

Fig.~\ref{fig:gpt2_additional_3} shows that when training on up to 5B tokens, \autoscale~-predicted weights decreases val loss at least $25\%$ faster than any baseline with up to $37\%$ speed up. 

Fig.~\ref{fig:weights_gpt} visualizes domain weights used for training \texttt{GPT-2 Large}, given by different methods.


Fig.~\ref{fig:doremi_ref_weights} visualizes \textsc{DoReMi} optimized domain weights with different reference weights and training steps. Training proxy/reference models for different steps gives different weights. It is unclear which weights are optimal. \textsc{DoReMi} recommends 200k steps, which equals >100B tokens in the default setup. Since optimization was conducted relative to the reference weights, reference weights have a profound impact on \textsc{DoReMi}'s output.


\begin{figure}[h!]
    \centering
    \begin{subfigure}[b]{0.48\textwidth}
        \includegraphics[width=\textwidth]{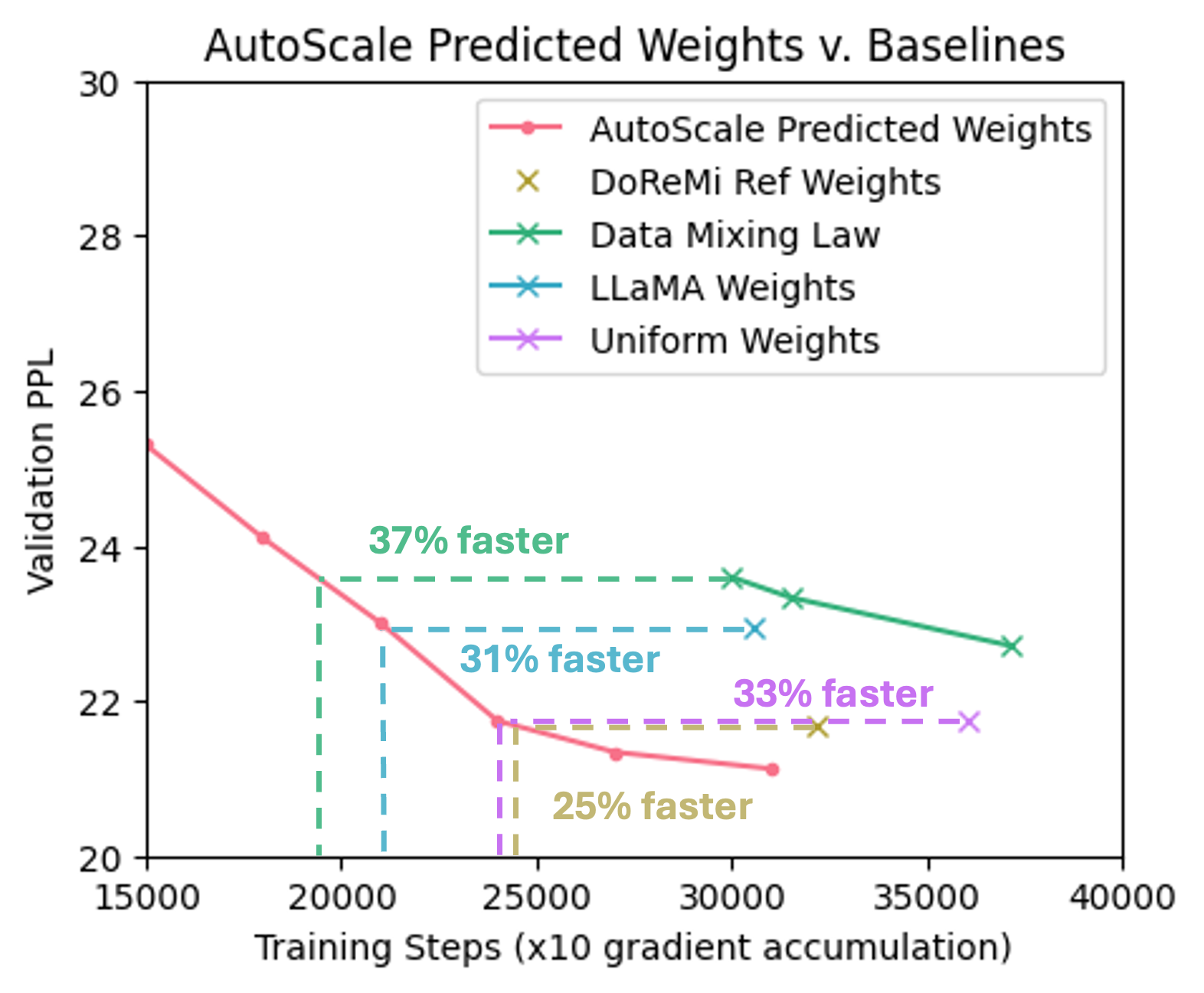}
        \caption{Training Decoder-only LMs for 3B tokens.}
        \label{fig:figure7a}
    \end{subfigure}
    \begin{subfigure}[b]{0.48\textwidth}
        \includegraphics[width=\textwidth]{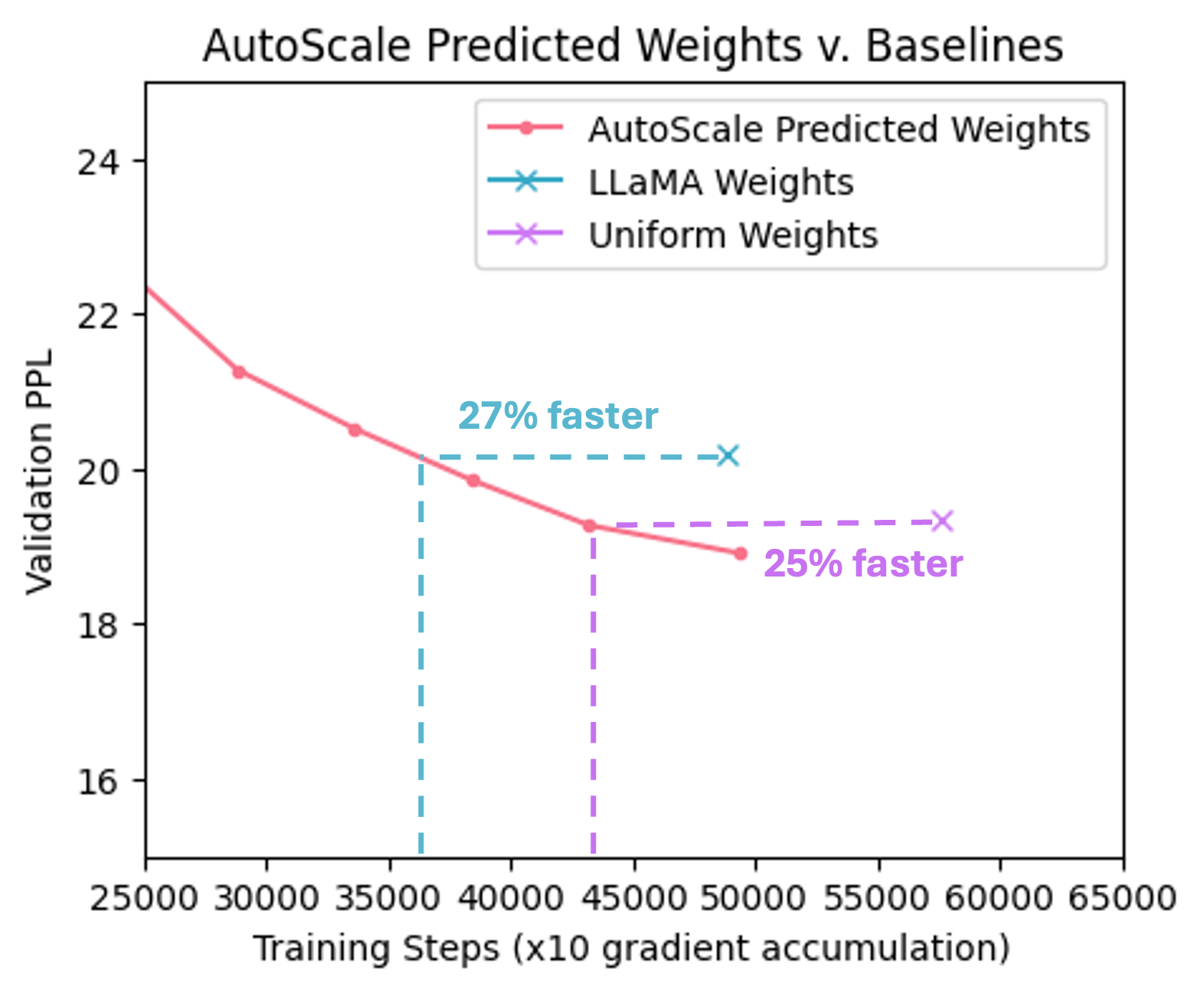}
        \caption{Training Decoder-only LMs for 5B tokens.}
        \label{fig:figure7b}
    \end{subfigure}
    \caption{\autoscale~-predicted weights decreases val loss at least $25\%$ faster than any baseline with up to $37\%$ speed up. Despite LLaMa weights being very different from uniform weights, they yield highly similar training efficiency at these data scales.}
    \label{fig:gpt2_additional_3}
\end{figure}

\begin{figure}[t!]
\begin{center}
  \includegraphics[width=0.5\textwidth]{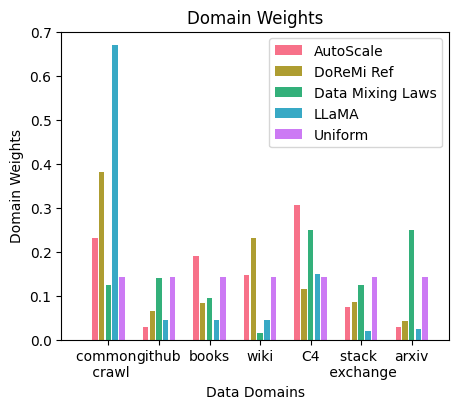}
  \vspace{-1em}
  \caption{Domain Weights used for training 774M Decoder-only LMs for 3B tokens. (Domain weights for \textsc{Data Mixing Laws} and \textsc{DoReMi} are from references \citep{ye2024data} and \citep{fan2023doge}, respectively, which are implemented on the same datasets/data domains with highly similar model architecture/model size/tokenizers.)
  }\label{fig:weights_gpt}
  \vspace{-1em}
  \end{center}
\end{figure}

\begin{figure}[h!]
    \centering
    \begin{subfigure}[b]{0.45\textwidth}
        \includegraphics[width=\textwidth]{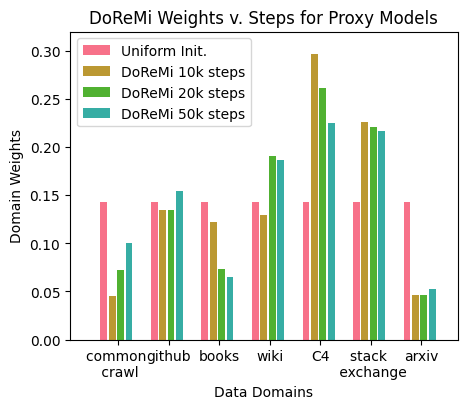}
        \caption{with Uniform Reference Weights}
    \end{subfigure}
    \hspace{1em}
    \begin{subfigure}[b]{0.45\textwidth}
        \includegraphics[width=\textwidth]{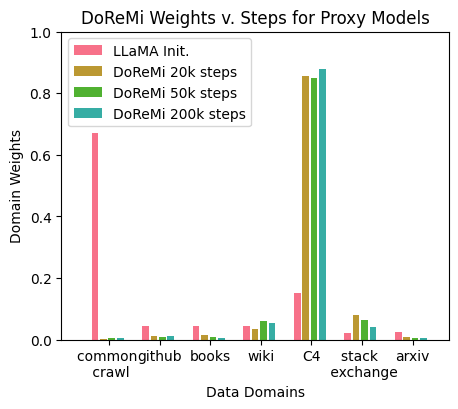}
        \caption{with LLaMA Reference Weights (Default)}
    \end{subfigure}
    \caption{\textsc{DoReMi} with different reference weights and steps.  Training proxy/reference models for different steps gives different weights. It is unclear which weights are optimal. \textsc{DoReMi} recommends 200k steps, which equals >100B tokens in the default setup. Since optimization was conducted relative to the reference weights, reference weights have a profound impact on \textsc{DoReMi}'s output.}
    \label{fig:doremi_ref_weights}
\end{figure}

\subsection{Additional Results on BERT (110M)}
\label{sec:appendix_additional_results_bert}

Fig.~\ref{fig:figure30}(b) shows the results on fitting validation loss with power-law functions, directly approximating how loss changes with each domain's data quantity. Compared to \texttt{BERT} models trained with MLM (right), \texttt{GPT} models trained with CLM (left) demonstrate a much stronger response to domain reweighting. In final results, \texttt{GPT}/CLM achieved $> 2\times$ speed-up margins relative to uniform weights compared to \texttt{BERT}/MLM. 

Fig.~\ref{fig:bert_additional} depicts the \autoscale~-predicted domain weights for training \texttt{BERT}. It is evident that optimal data quantity for each domain grows in exponential-style functions with training data scale where data sources with diverse samples (e.g., \texttt{WebText}) are upweighted relative to domains with standard format (e.g., \texttt{ArXiv}).

Table~\ref{table10} shows \autoscale~ notably improving training efficiency for \texttt{BERT} models on all scales–even for a considerably large scale, 288k steps, the speedup margin remains visible.

\begin{figure}[h!]
    \centering 
    \begin{subfigure}[b]{0.45\textwidth}
        \includegraphics[width=\textwidth]{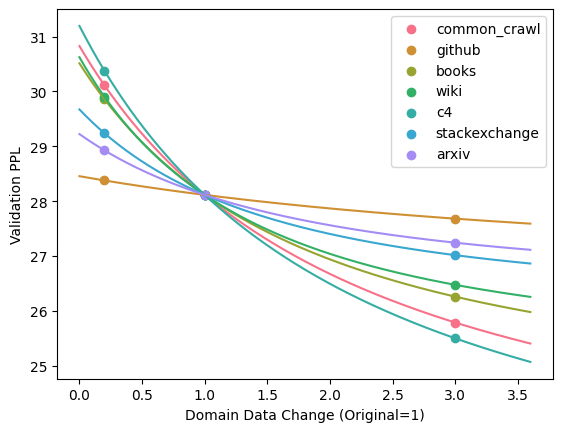}
        \caption{774M Decoder-only LMs (\texttt{GPT-2 Large})}
    \end{subfigure}
    \hspace{2em}
    \begin{subfigure}[b]{0.45\textwidth}
        \includegraphics[width=\textwidth]{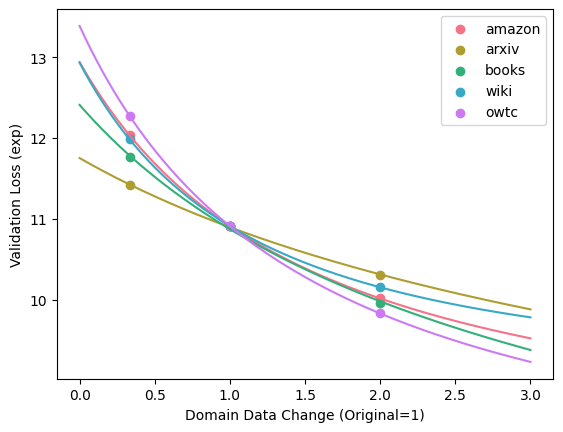}
        \caption{Encoder-only LMs (\texttt{BERT-case})}
    \end{subfigure}
    \caption{Fitting validation loss with power-law functions, directly approximating how loss changes with each domain's data quantity. Compared to \texttt{BERT} models trained with MLM (right), \texttt{GPT} models trained with CLM (left) demonstrate a much stronger response to domain reweighting. In final results, \texttt{GPT}/CLM achieved $> 2\times$ speed-up margins relative to uniform weights compared to \texttt{BERT}/MLM. }
    \label{fig:figure30}\vspace{-1em}
\end{figure}

\begin{figure}[h!]
    \centering
    \begin{subfigure}[b]{0.4\textwidth}
        \includegraphics[width=\textwidth]{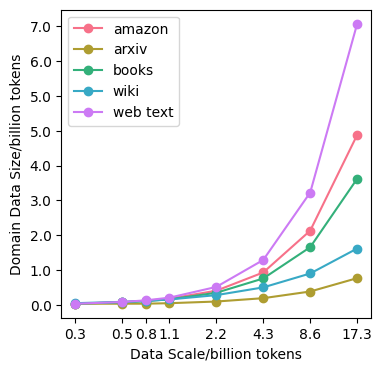}
        \caption{\autoscale~-predicted optimal data quantity for each domain as training data scales up.}
    \end{subfigure}
    \hspace{1em}
    \begin{subfigure}[b]{0.51\textwidth}
        \includegraphics[width=\textwidth]{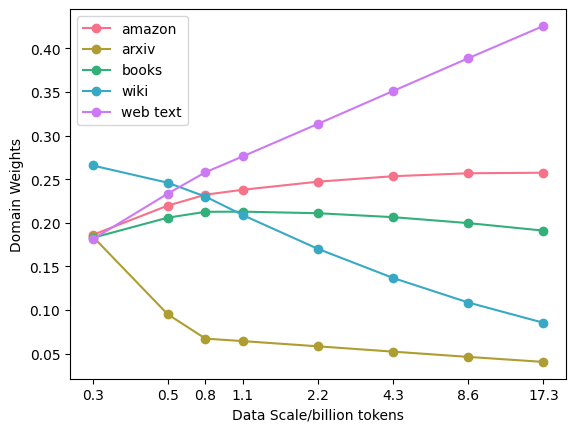}
        \caption{\autoscale~-predicted optimal domain weights as training data scales up.}
        \label{fig:bert_additional_sub_2}
    \end{subfigure}
    \caption{\autoscale~-predicted domain weights for training Encoder-only LMs (\texttt{BERT}). Optimal data quantity for each domain grows in exponential-style functions with training data scale (left) where data sources with diverse samples (e.g., \texttt{WebText}) are upweighted relative to domains with standard format (e.g., \texttt{ArXiv}).}
    \label{fig:bert_additional}
\end{figure}

\begin{table}[h!]\vspace{-0em}\centering{\resizebox{0.7\linewidth}{!}{
\begin{tabular}{lccccc}
\toprule
\textbf{Data Scale/steps} & 18k       & 36k       & 72k        & 144k       & 288k       \\\midrule
Final Loss (exp)  & 38.32     & 16.94     & 10.97      & 8.13       & 6.30       \\
Steps Saved       & 5k (28\%) & 5k (14\%) & 10k (14\%) & 20k (14\%) & 20k (10\%)\\
\bottomrule
\end{tabular}}}\caption{\autoscale~ notably improving training efficiency for \texttt{BERT} models on all scales–even for a considerably large scale, 288k steps, the speedup margin remains visible.}\label{table10} 
\end{table}

\section{Extended Discussions}\label{app:diss}

\paragraph{Intuition for Stage 2 of \autoscale, Optimal mix projection.}

Consider a stylized case where each training domain is independent. In this scenario, the validation performance attributable to each domain scales with the amount of training data from that same domain. Domains exhibit \textbf{different scaling behaviors:}

\begin{itemize}
\item As the training budget increases, the loss associated with some domains (e.g., 'easier' domains like standard texts from Wikipedia) decreases rapidly at first and then plateaus. If the total training data budget is small, it is sensible to allocate more budget to such domains, as they offer a greater initial reduction in validation loss.

\item Conversely, for other domains (e.g., diverse sources like Common Crawl), the loss decreases more slowly but steadily with an increasing training budget, continuing to provide benefits even when the loss from other domains has plateaued.

\item With larger compute budgets, it becomes more beneficial to allocate resources to these steadily improving domains rather than adding more data to 'easy' domains that are already saturated. (As Figure \ref{fig:evidence} illustrates, optimal compositions are scale-dependent.) The key insight, which we formalize in a theorem, is that under certain independence assumptions, the evolution of optimal domain compositions with increasing training data budgets can be analytically derived from their individual scaling laws. This allows us to predict optimal compositions at larger data budgets.
\end{itemize}

We then relax the assumption of strict domain independence. We theoretically prove that if the validation data requires a number of independent 'latent skills,' and data from each training domain contributes to one or more of these skills, our previous theory still holds in the same form. Thus, direct domain independence is not a strict requirement. This provides a complete theoretical grounding for our \autoscale tool, which captures the evolution of optimal domain weights with the training data budget and offers predictability for larger scales. Finally, our empirical evaluations demonstrate that the weights predicted by AutoScale offer clear advantages over domain weights optimized without considering this scale-dependence.the discussions and future work.

\paragraph{Number of predictors (scaling law components)}

We conducted ablation studies with DML on optimizing domain weights under a fixed compute dudget, following the same experiment procedure in Section \ref{sec:ddo} to pre-train GPT-2 Large models on the RedPajama dataset. According to DML’s procedure, we fit a separate exponential function between domain weights and validation perplexity for each of the 7 domains, resulting in 7 separate predictors. When predicting for the average validation loss, we use each predictor to predict the validation perplexity for the respective domain and take the average of these predictions as the final output. When fitting on proxy training on random data mixtures, this prediction approach yields an average absolute relative error (AAR) = 4.46\%. In comparison, similar to the treatment in the proposed DDO, we also fitted a single function between domain weights and the average validation perplexity using DML’s exponential formula. We experimented with using this function to directly predict the average validation perplexity. When fitting on proxy training on random data mixtures, this prediction approach yields an AAR = 1.61\%.

When using the fitted predictor to derive optimal data mixtures, we realized DML's formula does not allow optimize domain weights over a single predictor. For a validation domain, the predicted validation loss 
 is given as the exponentiation of weighted average of domain weights plus some constant (Eq. (1) from \citet{ye2024data}). Since the domain weights are combined linearly before the exponentiation, if there is a single validation domain, minimizing the predicted loss will lead to a degenerate solution where all training data budgets is assigned to a single domain. Thus, it is necessary for DML to incorporate a number of separate predictors to introduce the crucial nonlinearity needed to model the interactions between different domains. The power-law formula in DDO is able to capture the nonlinearity in domain interactions in a single predictor, leading to lighter parametrization and enhanced efficiency








\end{appendices}
\end{document}